\newcommand{\X}{\mathbf{X}}
\newcommand{\W}{\mathbf{W}}
\newcommand{\Z}{\mathbf{Z}}
\newcommand{\A}{\mathbf{A}}
\newcommand{\C}{\mathbf{C}}
\newcommand{\Y}{\mathbf{Y}}
\renewcommand{\P}{\mathbf{P}}
\newcommand{\U}{\mathbf{U}}
\newcommand{\V}{\mathbf{V}}
\newcommand{\+}[1]{\mathbf{#1}}
\newcommand{\E}{\mathbb{E}}
\renewcommand{\O}{\textnormal{O}}
\renewcommand{\t}{\textnormal}
\newcommand{\Uu}[1]{\m{U}_{\m{#1}}}
\newcommand{\Vv}[1]{\m{V}_{\m{#1}}}
\newcommand{\Sig}[1]{\mbs{\Sigma}_{\m{#1}}}
\newcommand{\m}[1]{\mathbf{#1}}
\newcommand{\mr}[1]{\mathrm{#1}}
\newcommand{\mc}[1]{\mathcal{#1}}
\newcommand{\mb}[1]{\mathbb{#1}}
\newcommand{\mbs}[1]{\boldsymbol{#1}}
\DeclareMathOperator*{\argmin}{arg\,min}
\newcommand{\OP}{\mr{O}_{\mb{P}}}
\newcommand{\ThetaP}{\Theta_{\mb{P}}}
\newcommand{\mand}{\hspace{0.45em}\mr{and}\hspace{0.45em}}
\newcolumntype{?}{!{\vrule width 1.1pt}}
\newtheorem{assumption}{Assumption}
\newtheorem{definition}{Definition}
\newtheorem{theorem}{Theorem}
\newtheorem{proposition}{Proposition}
\newtheorem{corollary}{Corollary}
\newtheorem{lemma}{Lemma}
\title{Unsupervised Attributed Dynamic Network Embedding with Stability Guarantees}
\author[1]{Emma Ceccherini}{}
\author[2]{Ian Gallagher}{}
\author[3]{Andrew Jones}{}
\author[1]{Daniel Lawson}{}
\affil[1]{%
     University of Bristol, U.K.
}
\affil[2]{%
    The University of Melbourne, Australia
}
\affil[3]{%
    University of Edinburgh, U.K.
  }
\begin{document}

\maketitle
\begin{abstract}

Stability for dynamic network embeddings ensures that nodes behaving the same at different times receive the same embedding, allowing comparison of nodes in the network across time. We present attributed unfolded adjacency spectral embedding (AUASE), a stable unsupervised representation learning framework for dynamic networks in which nodes are attributed with time-varying covariate information. To establish stability, we prove uniform convergence to an associated latent position model. We quantify the benefits of our dynamic embedding by comparing with state-of-the-art network representation learning methods on four real attributed networks. To the best of our knowledge, AUASE is the only attributed dynamic embedding that satisfies stability guarantees without the need for ground truth labels, which we demonstrate provides significant improvements for link prediction and node classification.

\end{abstract}

\section{Introduction}

Representation learning on dynamic networks \citep{goyal2020dyngraph2vec,zuo2018embedding,trivedi2019dyrep,zhou2018dynamic,ma2020streaming,hamilton2017inductive, goyal2018dyngem}, which learns a low dimensional representation for each node, is a widely explored problem. 
While most existing network embedding techniques focus solely on the network features, nodes in real-world networks are associated with a rich set of attributes. For example, in a social network, the user's posts are significantly correlated with trust and following relationships, and it has been shown that jointly exploiting both information sources improves learning performance \citep{tang2013exploiting}. 

Network embeddings for static attributed networks include frameworks based on matrix factorisation \citep{yang2015network}, or deep learning \citep{gao2018deep,tu2017cane,tan2023collaborative, sun2016general,zhang2018anrl,li2021deep}. Some existing dynamic network embeddings leverage node attributes, but their exploitation of node attributes is rather limited, as they are usually solely used to initialise the first layer \citep{sankar2020dysat,dwivedi2023benchmarking,liu2021motif,xu2020embedding,xu2020inductive}. 

Approaches that purposefully exploit node attributes include frameworks based on matrix factorisation \citep{liu2020dynamic,li2017attributed}, deep learning  \citep{tang2022dynamic,ahmed2024learning, wei2019lifelong}, or Bayesian modelling \citep{luodi2024learning}.
However, to the best of our knowledge, none of these methods have stability guarantees, which ensure that if two node/time pairs `behave the same' in the network, their representation is the same up to noise. 
Stability allows for the comparison of embeddings over time because the embedding space has a consistent interpretation.

\begin{figure*}[h]
\centering
  \includegraphics[width=\textwidth]{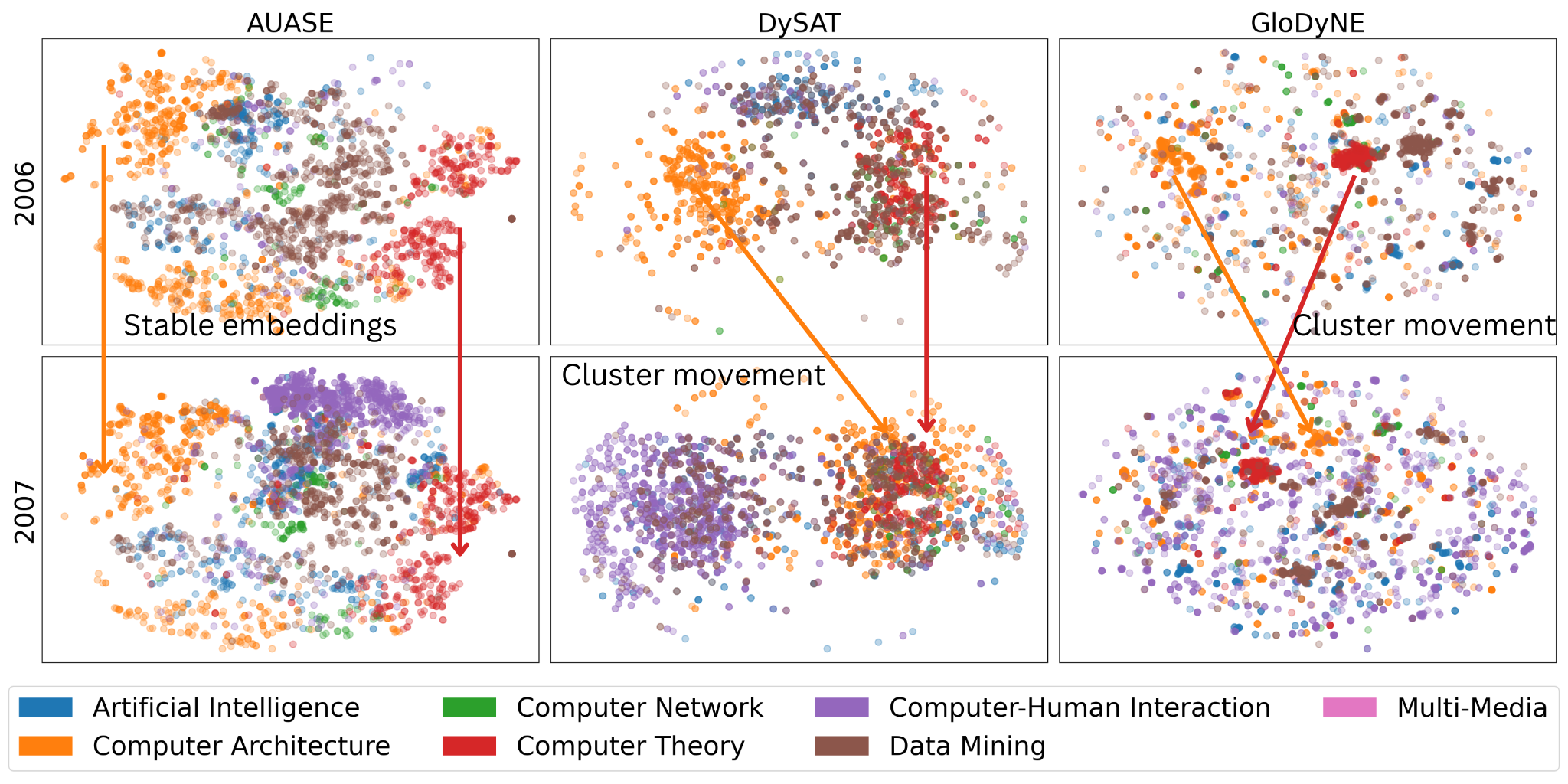}
  \caption{AUSE provides stable embeddings, meaning that nodes that behave the same at different times receive the same embedding illustrated with two-dimensional t-SNE visualisations of node embeddings (in a shared temporal space) of the DBLP datasets. Colours correspond to community membership.}\label{fig:DBLP}
\end{figure*}

Attributed unfolded adjacency spectral embedding (AUASE) is a framework for unsupervised dynamic attributed network embedding with stability guarantees. AUASE builds on unfolded adjacency spectral embedding (UASE) \citep{jones2021multilayer}, an unsupervised spectral method for dynamic network embedding with stability guarantees \citep{gallagher2022spectral}. However, UASE is only suited to unattributed networks. To include node attributes, AUASE combines the adjacency matrix and a covariate matrix, with edge weight proportional to covariate values, into an attributed adjacency matrix. 

To analyse the statistical properties of AUASE, we define an attributed dynamic latent position model combining a latent position dynamic network model \citep{gallagher2022spectral} with a dynamic covariate model. We show that AUASEs converge asymptotically to the noise-free embedding, which is essential to demonstrate AUASE's stability properties. 

Figure~\ref{fig:DBLP} shows the embeddings of AUASE and two comparable methods - visually explaining the difference between stable and unstable embeddings. The astute reader will be wondering why leading dynamic graph methods could have such a visually poor internal representation. There are a wide class of problems - for which graph neural networks (GNNs) were developed - that can be expressed as a supervised learning problem, and the classifier learns a separate map for each time point, for instance, node classification and link prediction \citep{rossi2020temporal,pareja2020evolvegcn}. However, many problems are not of this form and require \emph{unsupervised stable} embeddings.  Examples include but are not limited to: group discovery in purchasing networks for marketing purposes \citep{Palla_2007}, anomaly detection in financial or cybersecurity networks \citep{ranshous2015anomaly} and protein function prediction in dynamic biological networks \citep{klein2012structural, yue2020graph}. 

We compare AUASE to state-of-the-art unsupervised attributed dynamic embeddings on the tasks of node classification and link prediction as a means to demonstrate AUASE embedding's validity and the value of stability. In extensive experiments on four large real-world datasets AUASE considerably outperforms other methods, which in many cases perform worse than baseline guessing precisely because they lack embedding stability.
 
\section{Theory \& Methods}\label{sec:theory}

Let $\mathcal{G} = (\mathcal{G}^{(1)}, \dots, \mathcal{G}^{(T)})$ represent a dynamic sequence of $T$ node-attributed networks each with nodes $[n] = \{1, \dots , n\}$. At time $t \in [T]$, we denote $\mathcal{G}^{(t)} = (\A^{(t)}, \C^{(t)})$, where $\A^{(t)} \in \{0,1\}^{n \times n}$ represents the network structure using a binary adjacency matrix and $\C^{(t)} \in \mathbb{R}^{n \times p}$ represents the time-varying node attributes as covariates. The aim is to obtain a low-dimensional embedding $\{\hat{\Y}_\A^{(t)}\}_{t \in [T]}$ representing the network and covariate behavior of each node across time. 

\subsection{Attributed Unfolded Adjacency Spectral Embedding}

We present our attributed embedding procedure, which extends unfolded adjacency spectral embedding (UASE).

For each time period $t \in [T]$, we incorporate the covariates $\+C^{(t)}$ into the adjacency matrix $\+A^{(t)}$ by including them as $p$ attribute nodes in the network. Nodes are connected to these $p$ attribute nodes with edge weight proportional to the corresponding covariate value, producing the augmented adjacency matrix,
\begin{align*}
    \A_C^{(t)} = \begin{bmatrix}
    (1-\alpha)\A^{(t)} & \alpha\C^{(t)} \\
    \alpha\C^{(t)\top} & \mathbf{0}_{p \times p}
    \end{bmatrix} \in \mathbb{R}^{(n+p) \times (n+p)},
\end{align*}
where the hyperparameter $\alpha \in [0,1]$ balances the relative contributions of $\A^{(t)}$ and $\C^{(t)}$, fixed for all $t \in [T]$.

Given the augmented adjacency matrices $\A_C^{(t)}$, the procedure continues analogously to UASE. Hence, for $\alpha =0$, AUASE reduces to UASE. The unfolded attributed adjacency matrix $\A_C$ is constructed by concatenating the attributed adjacency matrices, and the dynamic spectral embedding is obtained using the output of the $d$-truncated SVD. This procedure is detailed in Algorithm \ref{AUASE}.

\begin{algorithm}[h]
\caption{Attributed unfolded adjacency spectral embedding (AUASE).}\label{AUASE}
\begin{algorithmic}[1]
\Require Attributed dynamic network $\mathcal{G}$, embedding dimension $d$, hyperparameter $\alpha \in [0,1]$.

\State Construct the attributed adjacency matrix $\A_C^{(t)}$.

\State Construct the unfolded attributed adjacency matrix
\begin{equation*}
    \A_C = (\A_C^{(1)} \mid \cdots \mid \A_C^{(T)}).
\end{equation*}
\State Compute the $d$-truncated SVD of \begin{equation*}
    \A_C \approx \U_{\A} \+\Sigma_{\A}\V_{\A}^\top.
\end{equation*}

\State Divide $\V_{\A}^\top$ into $T$ blocks \begin{equation*}\V_{\A} = (\V_{\A}^{(1)} \mid  \cdots \mid \V_{\A}^{(T)}).\end{equation*}
\State Define the attributed unfolded adjacency spectral embeddings (AUASE) as 
\begin{equation*}\hat{\Y}_\A^{(t)}= (\Y^{(t)}_{\A})_{1:n}=(\V_{\A}^{(t)}\+\Sigma_{\A}^{1/2})_{1:n}^\top,\end{equation*}

\Ensure Node embeddings $\{\hat{\Y}_\A^{(t)}\}_{t \in [T]}$.
\end{algorithmic}
\end{algorithm}

We focus exclusively on the rows of $\Y_\A^{(t)}$ corresponding to nodes in the dynamic network $\mathcal{G}$. While AUASE produces dynamic embeddings for the $p$ attribute nodes, we do not consider their asymptotic properties as we assume a fixed number of attributes.

\subsection{Attributed Dynamic Network Model} To analyse the asymptotic properties of the embeddings, we propose a latent position model for dynamic attributed networks. The use of dynamic latent position models to study dynamic networks is well established \citep{jones2021multilayer, sewell2015latent, lee2011latent, kim2018reviewdynamicnetworkmodels}. To incorporate node attributes, we combine a non-attributed dynamic network model \citep{gallagher2022spectral} with a dynamic covariate model using the same dynamic latent positions.

\begin{definition}[Dynamic network model]\label{def:DNM}
Conditional on latent positions $Z_i^{(t)} \in \mathcal{Z}$, the sequence of symmetric matrices $\+A^{(1)}, \ldots, \+A^{(T)} \in \mathbb{R}^{n \times n}$ is distributed as a dynamic latent position network model, if, for all $i < j$,
\begin{align*}
    \A_{ij}^{(t)} \mid Z_i^{(t)},Z_j^{(t)} &\stackrel{\text{ind}}{\sim} H(Z_i^{(t)}, Z_j^{(t)}),
\end{align*}
where $H$ is a symmetric real-valued distribution function, $H(Z_1, Z_2) = H(Z_2, Z_1)$, for all $Z_1, Z_2 \in \mathcal{Z}$.
\end{definition}

\begin{definition}[Dynamic covariate model]\label{def:DCM}
Conditional on latent positions $Z_i^{(t)} \in \mathcal{Z}$, the sequence of covariate matrices $\+C^{(1)}, \ldots, \+C^{(T)} \in \mathbb{R}^{n \times p}$ is distributed as a dynamic latent position covariate model, if, for all $i \in [n]$, $\ell \in [p]$,
\begin{align*}
    \C_{i\ell}^{(t)} \mid Z_i^{(t)} &\stackrel{\text{ind}}{\sim} f_\ell(Z_i^{(t)}),
\end{align*}
where $f_\ell(Z)$ is a real-valued distribution for all $Z \in \mathcal{Z}$.
\end{definition}

Our goal is to describe the attributed adjacency matrices $\+A_C^{(t)}$ as a special type of dynamic network model (Definition \ref{def:DNM}). To include node attributes, we add fixed latent positions $Z_{n+\ell}^{(t)} = i_\ell$ representing an index for each covariate such that $\mathcal{I} = \{i_\ell: \ell \in [p] \}$ is disjoint from the possible node latent positions $\mathcal{Z}$. For convenience, we denote $f_{i_\ell}(Z) = f_\ell(Z)$.

\begin{definition}[Attributed dynamic latent position model]\label{def:ADCM}
Conditional on latent positions $Z_i^{(t)} \in \mathcal{Z}$ and fixed latent positions $Z_{n+\ell}^{(t)} = i_\ell$, the sequence of symmetric matrices $\+A_C^{(1)}, \ldots, \+A_C^{(T)} \in \mathbb{R}^{(n+p) \times (n+p)}$ is distributed as an attributed dynamic latent position network model with sparsity parameter $\rho > 0$, if,

for all $i < j$,
\begin{align*}
    \A_{ij}^{(t)} \mid Z_i^{(t)},Z_j^{(t)} &\stackrel{\text{ind}}{\sim} H(Z_i^{(t)}, Z_j^{(t)}),
\end{align*}
where $H$ is a symmetric real-valued distribution function satisfying
\begin{align*}
    \lefteqn{H(Z_i, Z_j) =} \\
    &\begin{cases}
        (1-\alpha)\t{Bernoulli}( \rho f(Z_i, Z_j)) & Z_i, Z_j \in \mathcal{Z}, \\
        \alpha \rho^{1/2} f_{Z_j}(Z_i) & Z_i \in \mathcal{Z}, Z_j \in \mathcal{I}, \\
        \delta_0 & Z_i, Z_j \in \mathcal{I},
    \end{cases}
\end{align*}
\end{definition}
where $\delta_0$ represents the Dirac delta function with mass at zero, and $\+X \sim \alpha f$ is shorthand for $\+X / \alpha \sim f$.

Definition~\ref{def:ADCM} includes the scenario where the covariates do not change over time, for all nodes, $\+C_{i \ell}^{(t)} = \+C_{i \ell}$ for all $t \in [T]$. To show this, we construct latent positions which includes the fixed covariate, $Z'_i = (Z_i, \+C_{i \ell})$. The dynamic covariate model is then a deterministic function of this new latent position, $f_\ell(Z'_i) = \+C_{i \ell}$. Multiple covariates can be included this way.

\subsection{Theoretical Results}
In this section, we present theoretical results that describe the asymptotic behaviour of the dynamic embedding $\hat{\Y}^{(t)}$ given the following model assumptions:

\begin{assumption}[Low-rank expectation]\label{low rank ass}
There exist maps $\phi: \mathcal{Z}^T \cup \mathcal{I}^T \rightarrow \mathbb{R}^d $ and $\phi_t: \mathcal{Z} \cup \mathcal{I} \rightarrow \mathbb{R}^d $ for all $t \in [T]$ such that 
\begin{equation*}
    \mathbb{E} [(\A_C^{(t)})_{ij} \mid Z_i, Z_j] = \phi(Z_i) \phi_t(Z_j^{(t)})^\top,
\end{equation*}
where $Z_i = \{ Z_i^{(1)}, \ldots, Z_i^{(T)} \}$.
\end{assumption}

\begin{assumption}[Singular values of $\P_C$]\label{sing values P ass}
The $d$ non-zero singular values of the mean unfolded adjacency matrix $\P_C = \mathbb{E}(\A_C)$ satisfy 
\begin{equation*}
    \sigma_i(\P_C) = \Theta(T^{1/2}\rho n)
\end{equation*}
 for all $i \in [d]$ with high probability as $n \rightarrow \infty$.
\end{assumption}

\begin{assumption}[Finite number of covariates]\label{n cov ass} $p = \O(1)$.
\end{assumption}

\begin{assumption}[Network sparsity]\label{sparsity ass}
    The sparsity factor $\rho$ satisfies \begin{equation*}
        \rho  = 
        \omega(n^{-1} \log^k(n) )
    \end{equation*}
    for some constant $k$.
\end{assumption}

\begin{assumption}[Subexponential tails]\label{exp tail}
There exists a constant $\beta >0$ such that
\begin{equation*}
    \mathbb{P}\left(\lvert  f_l(Z_i^{(t)}) - \mathbb{E}[f_l(Z_i^{(t)})] \rvert > x \mid Z_i^{(t)}\right) \leq \t{exp}\left(-\beta x\right)
\end{equation*}
\end{assumption}

Assumption~\ref{low rank ass} allows us to define a canonical choice for the maps $\phi$ based on the mean attributed unfolded adjacency matrix $\P_C = \mathbb{E}[\A_C]$. The low-rank assumption states that the $d$-truncated SVD $\P_C = \U_{\P}\+\Sigma_{\P} \V_{\P}$ is exact and we define canonical choice for the maps,
\begin{align*}
    \phi(Z_i) &= (\U_{\P}\+\Sigma_{\P}^{1/2})_i \in \mathbb{R}^d, \\
    \phi_t(Z_i^{(t)}) & = (\V_{\P}^{(t)} \+\Sigma_{\P}^{1/2})_i= (\Y_\P^{(t)})_i \in \mathbb{R}^d.
\end{align*}
In particular, we will demonstrate how the AUASE output $\hat{\Y}_\A^{(t)}$ is a good approximation of the noise-free embedding $\hat{\Y}_\P^{(t)} = (\Y_\P^{(t)})_{1:n} $ defined by the map $\phi_t$. 

Assumption~\ref{sing values P ass} is a technical condition which ensures that the growth of the singular values of $\P_C$ is regulated. Assumption~\ref{n cov ass} assumes a fixed number of covariates which alongside Assumption~\ref{sparsity ass} ensures that the networks are dense enough to recover the latent positions. Versions of these assumptions are required to analyse the multilayer random dot product graph \citep{jones2021multilayer}. 
These assumptions are not limiting as for finite $n$ (i.e., any practical application) scaling $\alpha$ would mitigate any imbalance between sparsity in the graph and the covariates.

Assumption~\ref{exp tail} prevents large attribute values, which would otherwise ruin the structure in an embedding. It is satisfied trivially for bounded distributions like the Bernoulli and Beta distributions and many other
common distributions like the exponential and Gaussian distributions. An equivalent condition is established in the weighted generalised random dot product graph model \citep{gallagher2023spectralembeddingweightedgraphs}.

Our main result shows that $\hat{\Y}_\A^{(t)}$ is a good approximation of some invertible linear transformation of $\hat{\Y}_\P^{(t)}$, as the size of the largest error tends to zero as the number of nodes grows. Let $\lVert \cdot \rVert_{2\rightarrow \infty}$ denote the two-to-infinity norm \citep{cape2019two}, the maximum row-wise Euclidean norm of a matrix.

\begin{theorem}[Uniform consistency]\label{2toinf}
   Under Assumptions~\ref{low rank ass}-\ref{exp tail} there exists a sequence of orthogonal matrices $\W = \W_n \in \mathbb{O}(d)$ such that, for all $t \in [T]$,
   \begin{align*}
       \lVert{\hat{\Y}}_\A^{(t)} - \hat{\Y}^{(t)}_{\P} \W\rVert_{2\rightarrow \infty} &= \OP\left(\frac{\log^{1/2}(n)\,r_\alpha(1,1/\beta)}{T^{1/4}\rho^{1/2}n^{1/2}}\right),
   \end{align*}
where for $x,y \in \mb{R}$, we define $r_\alpha(x,y) = (1-\alpha)x + \alpha y$, and $X_n =\OP(a_n)$ means that $X_n/a_n$ is bounded in probability \cite{janson2011probability}.
\end{theorem}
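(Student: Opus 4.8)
The plan is to follow the spectral-perturbation template developed for the unfolded adjacency spectral embedding \citep{jones2021multilayer,gallagher2022spectral} and its weighted-edge extension \citep{gallagher2023spectralembeddingweightedgraphs}, since by Definition~\ref{def:ADCM} together with Assumption~\ref{low rank ass} the attributed unfolded matrix $\A_C$ is a dynamic network model whose mean $\P_C = \E[\A_C]$ has exact rank $d$, so the whole argument is a comparison of the $d$-truncated SVDs of $\A_C$ and $\P_C$. I would organise it in three stages: (i) a global spectral-norm concentration bound for $\A_C - \P_C$; (ii) a Davis--Kahan/Wedin step turning this into closeness of the left and right $d$-dimensional singular subspaces, using the singular-value scale supplied by Assumption~\ref{sing values P ass}; and (iii) a row-wise refinement upgrading the bound to the two-to-infinity norm, which is where the noise-source-dependent factor $r_\alpha(1,1/\beta)$ and the $\log^{1/2}(n)$ arise.

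For stage (i) I would split $\A_C - \P_C$ into the blocks $(1-\alpha)(\A^{(t)} - \E[\A^{(t)}])$ and the centred covariate blocks with entries $\alpha\rho^{1/2}(f_\ell(Z_i^{(t)}) - \E[f_\ell(Z_i^{(t)})])$. The first group is a sparse matrix with independent, bounded, mean-zero entries whose operator norm is controlled at scale $(1-\alpha)(\rho n)^{1/2}$ times a polylogarithmic factor by standard random-matrix bounds, Assumption~\ref{sparsity ass} guaranteeing enough density for these to apply. The second group has independent subexponential entries by Assumption~\ref{exp tail}, each with standard deviation $\O(\rho^{1/2}/\beta)$; since $p = \O(1)$ by Assumption~\ref{n cov ass}, a Frobenius bound (or matrix Bernstein after truncation) controls this block at scale $\alpha\rho^{1/2}n^{1/2}/\beta$ times a polylogarithmic factor. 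Combining, and keeping the $\alpha$-weights explicit, yields $\lVert \A_C - \P_C\rVert = \OP\!\bigl(r_\alpha(1,1/\beta)\,\rho^{1/2}n^{1/2}\log^{c}(n)\bigr)$ for some constant $c$.

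For stages (ii)--(iii), Assumption~\ref{sing values P ass} gives $\sigma_d(\P_C) = \Theta(T^{1/2}\rho n)$, so Wedin's theorem bounds $\lVert \sin\Theta(\U_\A,\U_\P)\rVert$ and $\lVert \sin\Theta(\V_\A,\V_\P)\rVert$ by the ratio of the stage-(i) bound to $\sigma_d(\P_C)$, and taking $\W$ to be the orthogonal Procrustes aligner of $\V_\P$ to $\V_\A$ controls $\hat\Y_\A^{(t)} - \hat\Y_\P^{(t)}\W$ in spectral norm after accounting for the $\+\Sigma^{1/2}$ factors. To reach the two-to-infinity bound I would expand $\hat\Y_\A^{(t)} - \hat\Y_\P^{(t)}\W$ into a leading linear term — the node rows of the $t$-th block of $(\A_C - \P_C)^\top\U_\P\+\Sigma_\P^{-1/2}$ — plus higher-order remainders absorbed by the spectral bounds already obtained (this uses incoherence, $\lVert\U_\P\rVert_{2\to\infty} = \O(n^{-1/2})$ on node rows, which follows from the low-rank structure and the singular-value scale of Assumptions~\ref{low rank ass}--\ref{sing values P ass}). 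The leading term is bounded one row at a time: each row of $\A_C^{(t)} - \P_C^{(t)}$ is a sum of independent coordinates — $n$ bounded Bernoulli-type and $p$ subexponential covariate ones — so a Bernstein inequality for the former and a subexponential Bernstein inequality for the latter give per-row concentration at scale $r_\alpha(1,1/\beta)\rho^{1/2}\log^{1/2}(n)$, and a union bound over the $n$ node rows contributes the single $\log^{1/2}(n)$ factor; dividing by $\sigma_d(\P_C)$ and multiplying by $\sigma_1(\P_C)^{1/2}$ then produces the stated rate $\OP\!\bigl(\log^{1/2}(n)\,r_\alpha(1,1/\beta)\,T^{-1/4}\rho^{-1/2}n^{-1/2}\bigr)$.

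I expect the main obstacle to be stage (iii) for the covariate block. Because the covariate entries are only subexponential rather than bounded, the truncation level in the row-wise Bernstein step must be chosen so that the union bound over $n$ rows costs only $\log^{1/2}(n)$ rather than a higher power; moreover the attribute-node rows of $\U_\P$ are a factor $\alpha\rho^{-1/2}$ less incoherent than the node rows, so one must check that their contribution — where part of the $r_\alpha(1,1/\beta)$ and the $\rho^{-1/2}$ enter — stays within the claimed rate. A secondary difficulty is the bookkeeping: verifying that the higher-order terms of the two-to-infinity expansion are genuinely lower order given the $\rho^{1/2}$ rescaling of the covariate block, and that the incoherence constants and the single rotation $\W$ work simultaneously across all $T$ time blocks and all $n$ nodes.
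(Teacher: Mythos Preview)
Your three-stage plan --- operator-norm concentration for $\A_C-\P_C$, Davis--Kahan/Wedin for the singular subspaces, then a two-to-infinity expansion into a leading linear term $(\A_C-\P_C)^\top\U_\P\+\Sigma_\P^{-1/2}\W$ plus remainders --- is exactly the paper's route, and your identification of the adjacency/covariate block split, the subexponential matrix-Bernstein treatment of the covariate block, the Procrustes choice of $\W$, and the node-row incoherence $\|\U_\P\|_{2\to\infty}=\O(n^{-1/2})$ all match what the paper does.

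The one place your outline understates the work is the claim that the higher-order remainders are ``absorbed by the spectral bounds already obtained''. After further splitting, one sub-term has the shape $(\A_C-\P_C)^\top(\m{I}-\U_\P\U_\P^\top)\U_\A\+\Sigma_\A^{-1/2}$ restricted to node rows, and this is \emph{not} controlled at the required rate by submultiplicativity together with incoherence and Davis--Kahan alone: the naive estimate loses a factor of $n^{1/4}$, and for the right embedding an additional $T^{1/2}$, so it would sit at the same order as (or above) the leading term rather than below it. The paper recovers these factors by observing that, under i.i.d.\ latent positions, the node rows of this matrix within each time block are exchangeable, so Markov's inequality turns the available Frobenius bound into a $2\to\infty$ bound with the needed $n^{-1/4}$ (and, after splitting into $T$ blocks, $T^{-1/2}$) gain. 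This device is inherited from the UASE analysis you cite \citep{jones2021multilayer}, so it is within your stated template, but it is the step that goes beyond ``spectral bounds plus incoherence'' and you should plan to invoke it explicitly.
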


Theorem \ref{2toinf} has important methodological implications similar to the equivalent theorem in \cite{jones2021multilayer}. Uniform consistency in the two-to-infinity norm implies that subsequent statistical analysis is consistent up to rotation.

\subsection{Stability Properties}\label{sec:stability}
\cite{xue2022dynamic} argues that dynamic network embeddings are desirable to preserve the network's local structure and the long-term dynamic evolution. The survey observes that most dynamic network embeddings possess only one of these properties, but not both. In this section, we show that AUASE captures both local and global network structures, which we refer to as spatial and temporal stability, respectively.

We say that node/time pairs $(i,s)$ and $(j,t)$ are exchangeable if they `behave the same' within the dynamic network. For the attributed dynamic latent position model (Definition~\ref{def:ADCM}) this is equivalent to stating, for all $Z \in \mathcal{Z} \cup \mathcal{I}$,
\begin{align*}
    H(Z_i^{(s)}, Z) &= H(Z_j^{(t)}, Z).
\end{align*}
It is desirable for exchangeable node/time pairs $(i,s)$ and $(j,t)$ to have equal embeddings $\hat{\Y}_i^{(s)}$ and $\hat{\Y}_j^{(t)}$ up to noise. If two exchangeable nodes behave consistently at a fixed time ($i \ne j, s = t$), we say the embedding has spatial stability. If a node behaves consistently between two exchangeable time points ($i = j, s \ne t$), we say the embedding has temporal stability.

The following lemma shows that AUASE has both spatial and temporal stability.

\begin{lemma}[AUASE stability] \label{stab-lemma}
Given exchangeable node/time pairs $(i,s)$ and $(j,t)$, the embeddings $(\hat{\Y}_\A^{(s)})_i$ and $(\hat{\Y}_\A^{(t)})_j$ are asymptotically equal,
\begin{align*}
    \lVert{ (\hat{\Y}_\A^{(s)})_i - (\hat{\Y}_\A^{(t)})_j \rVert} &=
    \OP\left(\frac{\log^{1/2}(n)\,r_\alpha(1,1/\beta)}{T^{1/4}\rho^{1/2}n^{1/2}}\right).
\end{align*}
\end{lemma}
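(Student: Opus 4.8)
The plan is to reduce the claim to the uniform consistency bound of Theorem~\ref{2toinf} through a triangle inequality that passes via the noise-free embeddings, and then to show that exchangeable node/time pairs have \emph{identical} noise-free embeddings. Let $\W\in\mathbb{O}(d)$ be the orthogonal matrix furnished by Theorem~\ref{2toinf} --- crucially the same $\W$ for every $t$ --- and write
\begin{align*}
\lVert (\hat{\Y}_\A^{(s)})_i - (\hat{\Y}_\A^{(t)})_j \rVert
&\le \lVert (\hat{\Y}_\A^{(s)})_i - (\hat{\Y}_\P^{(s)}\W)_i \rVert \\
&\quad + \lVert (\hat{\Y}_\P^{(s)}\W)_i - (\hat{\Y}_\P^{(t)}\W)_j \rVert \\
&\quad + \lVert (\hat{\Y}_\P^{(t)}\W)_j - (\hat{\Y}_\A^{(t)})_j \rVert .
\end{align*}

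For the two outer terms, each is the Euclidean norm of a single row of $\hat{\Y}_\A^{(u)} - \hat{\Y}_\P^{(u)}\W$ with $u\in\{s,t\}$, hence is at most $\lVert \hat{\Y}_\A^{(u)} - \hat{\Y}_\P^{(u)}\W \rVert_{2\rightarrow\infty}$. Theorem~\ref{2toinf} bounds this, for every time point, by $\OP\left(\frac{\log^{1/2}(n)\,r_\alpha(1,1/\beta)}{T^{1/4}\rho^{1/2}n^{1/2}}\right)$, which is exactly the rate claimed; so it remains to show the middle term vanishes.

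For the middle term, since $\W$ is orthogonal it preserves row norms, so, using $(\hat{\Y}_\P^{(t)})_j = \phi_t(Z_j^{(t)})$ for $j\in[n]$,
\begin{align*}
\lVert (\hat{\Y}_\P^{(s)}\W)_i - (\hat{\Y}_\P^{(t)}\W)_j \rVert
&= \lVert (\hat{\Y}_\P^{(s)})_i - (\hat{\Y}_\P^{(t)})_j \rVert \\
&= \lVert \phi_s(Z_i^{(s)}) - \phi_t(Z_j^{(t)}) \rVert ,
\end{align*}
and I would prove this is exactly zero. Assumption~\ref{low rank ass} makes the factorisation $\P_C^{(t)} = (\U_{\P}\+\Sigma_{\P}^{1/2})(\V_{\P}^{(t)}\+\Sigma_{\P}^{1/2})^\top$ exact, with left factor $\phi(Z_k) = (\U_{\P}\+\Sigma_{\P}^{1/2})_k$ shared across all $t$; since $\U_{\P}$ has orthonormal columns, $\{\phi(Z_k)\}_{k\in[n+p]}$ spans $\mathbb{R}^d$. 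Hence $\phi_s(Z_i^{(s)})$ is the unique $v\in\mathbb{R}^d$ with $\phi(Z_k)^\top v = \E[(\A_C^{(s)})_{ki}\mid Z_k,Z_i]$ for all $k$, and $\phi_t(Z_j^{(t)})$ the unique $v$ with $\phi(Z_k)^\top v = \E[(\A_C^{(t)})_{kj}\mid Z_k,Z_j]$ for all $k$. Using the symmetry of $H$ and the exchangeability hypothesis $H(Z_i^{(s)},Z)=H(Z_j^{(t)},Z)$ for all $Z\in\mathcal{Z}\cup\mathcal{I}$, one checks that these two linear systems have the same right-hand side, hence the same solution, so $\phi_s(Z_i^{(s)}) = \phi_t(Z_j^{(t)})$.

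The hard part will be this last step: turning exchangeability --- which on its own only controls how $(i,s)$ and $(j,t)$ interact with a \emph{fixed} latent position --- into equality of the expected-connectivity vectors entering the two linear systems, i.e.\ equality after projection onto the $d$-dimensional row space of $\P_C$. This is where the canonical construction of $\phi_t$ from the unfolded SVD must be used, as in the non-attributed analysis of \citep{gallagher2022spectral}; the attribute coordinates indexed by $\mathcal{I}$ should add nothing new, since exchangeability is required against every $Z\in\mathcal{Z}\cup\mathcal{I}$. Everything else --- the triangle inequality, bounding a row norm by the two-to-infinity norm, and norm-invariance under $\W$ --- is routine given Theorem~\ref{2toinf}.
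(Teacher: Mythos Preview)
Your proposal is correct and follows essentially the same argument as the paper: triangle inequality through the noise-free embeddings, Theorem~\ref{2toinf} for the two outer terms (using that the same $\W$ works for every $t$), and equality of the noise-free embeddings for the middle term. The paper obtains that equality by asserting that exchangeability forces the corresponding rows of $\P_C^{(s)}$ and $\P_C^{(t)}$ to coincide and then applying the formula $\Y_\P^{(t)} = \P_C^{(t)}\X_\P(\X_\P^\top\X_\P)^{-1}$, which is exactly your ``unique solution of a full-rank linear system with left factor $\X_\P$'' argument in different notation; the subtlety you flag about passing from exchangeability to equal connectivity vectors is present in the paper's proof as well, stated there without further comment.
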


Spatial and temporal stability is crucial for downstream machine-learning tasks. Training a predictive model using embedding data is only meaningful if the embedding space has a consistent interpretation. Sections~\ref{Sec:Example}, \ref{sec:node_class},   and \ref{sec:link_pred} demonstrate problems when using dynamic embedding techniques that do not have spatial or temporal stability. To the best of our knowledge, AUASE is the only existing attributed dynamic embedding which satisfies both these desirable stability properties.

\subsection{Parameter Selection}

The AUASE procedure detailed in Algorithm \ref{AUASE} requires two additional input parameters; the embedding dimension $d$ and the attributed adjacency matrix weight $\alpha$. This section outlines approaches for selecting these parameters.

In the theory it is assumed that the embedding dimension $d$ is known, however in practice it will need to be estimated. For a given $\alpha$, the truncated SVD of $\+A_C$ is computed up to some maximum embedding dimension and $d$ is determined by a singular value threshold, for example, using profile likelihood \cite{zhu2006automatic} or ScreeNOT \cite{donoho2023screenot}.

The hyperparameter $\alpha$ balances the contributions of the network and covariates in the AUASE procedure. The truncated SVD gives the best low-rank approximation of $\+A_C$ with respect to the Frobenius norm. By increasing $\alpha$, the truncated SVD is incentivised to minimise the squared error terms corresponding to the scaled covariate matrices in $\+A_C$.

For a downstream task, such as node classification, cross-validation can be used to choose $\alpha$. For unsupervised learning, one can compare embedding quality metrics \citep{tsitsulin2023unsupervised}. We show that practical performance is robust to the choice of $\alpha$ in Section~\ref{sec:alpha}.

\section{Experiments}

\subsection{Method Comparison} \label{Sec:Methods}

We compare AUASE to the following baselines: 
\begin{itemize}
    \item DRLAN \citep{liu2020dynamic}: Efficient framework incorporating an offline and an online network embedding model. Hyperparameter $\beta$ weights network features and attribute contribution.
    \item DySAT \citep{sankar2020dysat}: Learns node embeddings through self-attention layers and temporal dynamics.
    \item GloDyNE \citep{hou2020glodyne}: Dynamic but not attributed, and updates node embeddings for a selected subset of nodes that accumulate the largest topological changes over subsequent network snapshots.
    \item DyRep \citep{trivedi2019dyrep}: Inductive GNN-based dynamic graph embedding method, it supports (non-time-varying) node attributes. 
    \item CONN \citep{tan2023collaborative}: Attributed but static GNN that explicitly exploits node attributes. 
\end{itemize}

To the best of our knowledge, DRLAN and DySAT are the only unsupervised time-varying attributed dynamic embeddings with available source code, hence directly comparable to AUASE. The motivations for the exclusion of other comparable methods are in Appendix \ref{sec:meth_selec}.

For the sake of completeness, we also include a dynamic but not attributed method (GloDyNE), a dynamic embedding with non-time-varying attributes (DyRep) and a static method (CONN). However, these methods are not directly comparable with AUASE and, therefore, are not suited for tasks which require a stable dynamic attributed embedding on data with time-varying attributes. 

\subsection{Implementation and Efficiency}

Both AUASE and UASE can be easily implemented with the \textit{pyemb} Python package\footnote{\url{https://pyemb.github.io/pyemb/html/index.html}}. The code for the reproducing experiments is available at this GitHub repository \footnote{\url{https://github.com/emmaceccherini/AUASE}}. 

The computational times to compute the dynamic embeddings of four large real-data networks for each baseline are reported in Table \ref{tab:static_sc}. The runtime of AUASE implemented with the \textit{pyemb} Python package is orders of magnitude faster than competing methods, showing that AUASE is highly efficient for large graphs. 

We do not provide a comprehensive time and space complexity analysis as AUASE is essentially a sparse truncated SVD whose space and time complexity has been extensively studied. The space complexity in the sparsest regime we consider is $O(T n\log^k(n))$. 

The truncated SVD of $\A_C$
 can be computed using the Augmented Implicitly Restarted Lanczos Bidiagonalization algorithm \citep{baglama2005augmented} implemented in the \textit{irlba} package in R, or the \textit{irlbpy} in Python. Although the exact time complexity of this algorithm has not been studied theoretically, it is known that the time complexity of more general Lanczos-type algorithms for computing the rank 
 truncated SVD is  $O(Nd)$ \citep{tomas2023fast}, in the sparsest regime we consider, this is $O(T n\log^k(n) d)$. The author of the \textit{irlba} package demonstrates its speed by performing a simulated experiment in which they compute the first 2 singular vectors of a sparse 10M x 10M matrix with 1M non-zero entries which takes approximately 6 seconds on a computer with two Intel Xeon CPU E5-2650 processors (16 physical CPU cores) running at 2 GHz equipped with 128 GB of ECC DDR3 RAM \footnote{\url{https://bwlewis.github.io/irlba/comparison.html}}.

\subsection{Simulated Example} \label{Sec:Example}

\begin{figure*}[ht]
\centering
  \includegraphics[width=\textwidth]{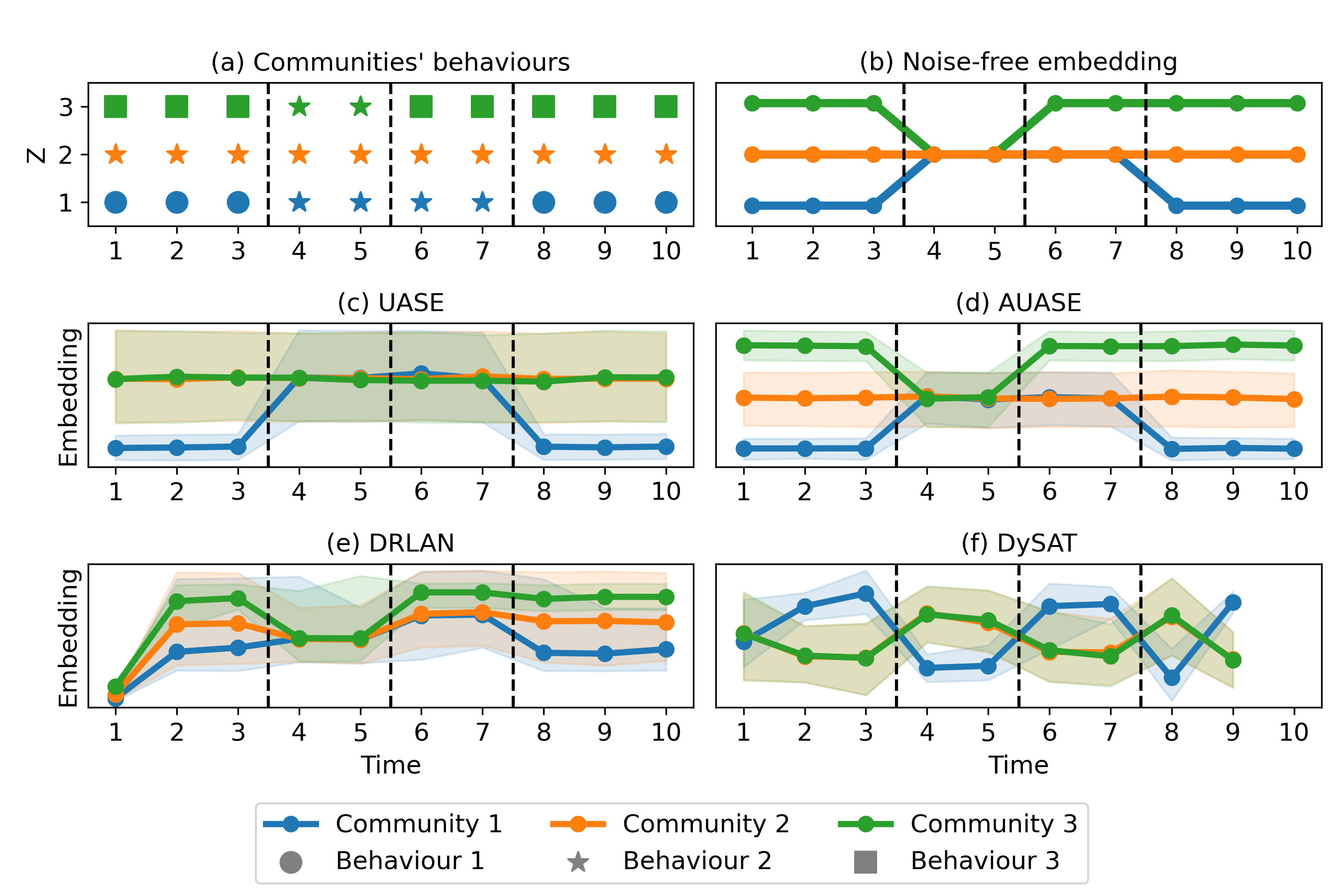}
  \caption{(a) Representation of global community (colour) and local behaviours (shape) of the synthetic attributed dynamic network. (b)-(f) One-dimensional UMAP visualisation of different node embedding techniques. The coloured lines show the mean embedding for each community with a 90\% confidence interval. The y-axis should only be interpreted as an illustration of community separation, rotated to best align the ground truth.}
  \label{Figure1}

\end{figure*}

To illustrate the properties of AUASE we simulate a sequence of attributed networks using a dynamic stochastic block model with three communities with time-varying node attributes.

We assume three possible behaviours denoted $Z_i^{(t)} \in \{1,2,3\}$ and define the attributed dynamic latent position model in Definition~\ref{def:ADCM},
\begin{align*}
    \+A_{ij}^{(t)} \mid Z_i^{(t)}, Z_j^{(t)} &\stackrel{\text{ind}}{\sim} \t{Bernoulli}(\+B_{Z_i^{(t)}, Z_j^{(t)}}),\\
    \+C_{i\ell}^{(t)} \mid Z_i^{(t)} &\stackrel{\text{ind}}{\sim} \t{Normal}(\+D_{Z^{(t)}_i, \ell}, \+I),
\end{align*}
where the latent position indexes into the community edge probability matrix $\+B \in \mathbb{R}^{3 \times 3}$ and  the mean attribute matrix $\+D \in \mathbb{R}^{3 \times p}$,
\begin{equation*}
    \+B = \begin{pmatrix}
    p_1 & p_0 & p_0\\
    p_0 & p_0 & p_0\\
    p_0 & p_0 & p_0
\end{pmatrix}, \quad 
    \+D = \begin{pmatrix}
    \bm{\mu}_1 \\
    \bm{\mu}_1 \\
    \bm{\mu}_2
\end{pmatrix}.
\end{equation*}
Note that in this experiment it is impossible to distinguish $Z_i^{(t)} = 2$ and $Z_i^{(t)} = 3$ using the adjacency matrices alone. Likewise, it is impossible to distinguish between $Z_i^{(t)} = 1$ and $Z_i^{(t)} = 2$ using the covariate matrices alone. Both are required to determine all three types of behaviour.

We construct the attributed dynamic latent position model with $n = 1000$ nodes and $p=150$ covariates over $T=10$ time points. Nodes are assigned with equal probability to three communities, depicted by the colours in Figure~\ref{Figure1}a, corresponding to a trajectory of local behaviours $(Z^{(1)}_i, \dots, Z^{(T)}_i)$, represented by the shapes in Figure~\ref{Figure1}a. The edge probabilities are $p_0 = 0.5$, $p_1 = 0.3$ and mean attributes $\bm{\mu}_1, \bm{\mu}_2 \in \mathbb{R}^p$ are detailed in Appendix~\ref{sec:extra_3}.

Figure~\ref{Figure1}b demonstrates the behaviour of the three communities we would like the dynamic embedding techniques to recover, shown through the noise-free embedding of $\+P_C = \E(\+A_C)$. For instance, for $t \in \{4,5\}$, all three communities have $Z^{(t)}_i = 2$ (star), so the embeddings are the same due to spatial stability. After $t = 7$ community $Z_i = 1$ (blue) switches from latent position $Z_i^{(t)} = 2$ (star) to $Z_i^{(t)} = 1$ (circle), so the embedding returns to its previous position due to temporal stability.

We compute the dynamic embedding using UASE, AUASE with $\alpha = 0.2$, DRLAN and DySAT into $d=3$ dimensions, the known number of communities. Further details and experiments regarding the choice of these parameters and specific algorithm hyperparameters can be found in Appendix \ref{sec:extra_3}.

Figures~\ref{Figure1}c-\ref{Figure1}f show the dynamic network embedding for these four techniques. For visualisation, we reduce the three-dimensional embeddings to one dimension with UMAP \cite{mcinnes2020umapuniformmanifoldapproximation}. The solid line shows the mean embedding for each community with a 90\% confidence interval.

Only AUASE is able to fully recover the underlying structure of the attributed network, displaying both spatial and temporal stability as predicted by the theory. UASE cannot detect the differences between communities 2 and 3, as we expected, as these only differ in their covariates, which is not considered by the technique.

DRLAN captures the dynamic patterns from both edge and attribute information, although it is unable to produce a sensible embedding at time $t=0$ and has wide confidence intervals. The stability is also very sensitive to the choice of hyperparameter (see Appendix~\ref{sec:extra_plot3}). DySAT is clearly unstable, for instance, the switching of embeddings between times $t=7$ and $t=8$. Moreover, the embeddings of communities 2 and 3 appear merged for the whole time frame as observed with UASE. This occurs because DySAT, like most existing neural network methods, only incorporates node attributes as input to the first layer \citep{dwivedi2022benchmarkinggraphneuralnetworks}.

\subsection{Datasets} \label{Sec:RealData}

As discussed in Section~\ref{sec:stability}, stability is a desirable intrinsic property of a dynamic embedding essential for any downstream analysis. To provide an objective measure of its value, we evaluate the learned node representations on the downstream tasks node classification and link prediction for the following four attributed dynamic networks. Further details about the construction of the adjacency matrices $\+A^{(t)}$ and covariate matrices $\+C^{(t)}$ are given in Appendix~\ref{sec:extra_4}.
\begin{itemize}
    \item DBLP\footnotemark{} \citep{ley2009dblp}: Co-authorship network consisting of bibliography data from computer science publications. Nodes represent authors with edges denoting co-authorship. Attributes are derived from an author's abstracts published in each time interval. For each interval an author has one of seven labels based on their most frequent publication area.

    \item ACM\footnotemark[\value{footnote}]: Co-authorship network similar to DBLP. 

    \footnotetext{\url{https://www.aminer.cn/citation}}
    \item Epinions\footnote{\url{http://www.epinions.com}} \citep{tang2013exploiting}: Trust network among users of a social network for sharing reviews. Nodes represent users with edges denoting a mutual trust relationship between users. Attributes are derived from users' reviews in each time interval. For each interval, a user has one of 22 labels based on their frequent review category. 

    \item ogbn-mag \footnote{\url{https://ogb.stanford.edu/docs/nodeprop}} \citep{wang2020microsoft}: Co-authorship network from the Microsoft Academic Graph. We construct 10 adjacency matrices and attributes matrices where authors are the nodes. The tasks we perform are at author-level not paper-level (usual for ogbn-mag) which creates much harder tasks (see Appendix~\ref{sec:extra_4} for more details).  
\end{itemize}
Dataset statistics are shown in Table~\ref{tab:datasets}.
\begin{table}[h]
    \centering
    \caption{Attributed dynamic network statistics for the datasets showing the number of nodes $n$, the number of covariates $p$, the number of time intervals $T$, and the number of node labels $K$.}
    \label{tab:datasets}
     \resizebox{\columnwidth}{!}{
    \begin{tabular}{c|c|c|c|c}
 \hline
 Datasets & \makecell{Nodes - $n$} & \makecell{Attributes - $p$} & \makecell{Intervals - $T$} & \makecell{Labels - $K$}\\
 \hline
 DBLP & 10092&4291 & 9&7 \\ 
 ACM & 34339 &6489&15& 2\\
 Epinions &15851 &7726&11&22\\
 ogbn-mag &479979 &128 &10&349\\
 \hline
    \end{tabular}
}
\end{table}

The embedding dimension is selected using ScreeNOT \citep{zhu2006automatic} giving $d=15$, $d=29$ and $d=22$ for DBLP, ACM and Epinions, respectively. The one exception is DySAT on the ACM dataset where $d =32$. For ogbn-mag we choose $d=50$ based on the scree plot. We use degree correction for AUASE and UASE embeddings \citep{passino2022spectral}. 

The number of nodes in ogbn-mag is an order of magnitude larger than in the other datasets, therefore only UASE, AUASE and DRLAN are computationally feasible. More details about the computational barriers of DySAT, GloDyNE, DyRep and CONN are given in Appendix~\ref{sec:extra_4}.

Figure~\ref{fig:DBLP} shows the dynamic embedding using AUASE, DySAT and GloDyNE for the DBLP dataset for two consecutive time intervals. We reduce each embedding into two-dimensions using t-SNE \cite{van2008visualizing} jointly over all time intervals to preserve stability. The DySAT and GloDyNE embeddings find community structures, but they appear to have moved over time showing a lack of temporal stability. Conversely, AUASE shows both types of stability, for instance, authors in Computer Architecture and Computer Theory staying approximately in the same position over time.

\begin{figure*}[h]

\centering

    \includegraphics[width=2.0\columnwidth]{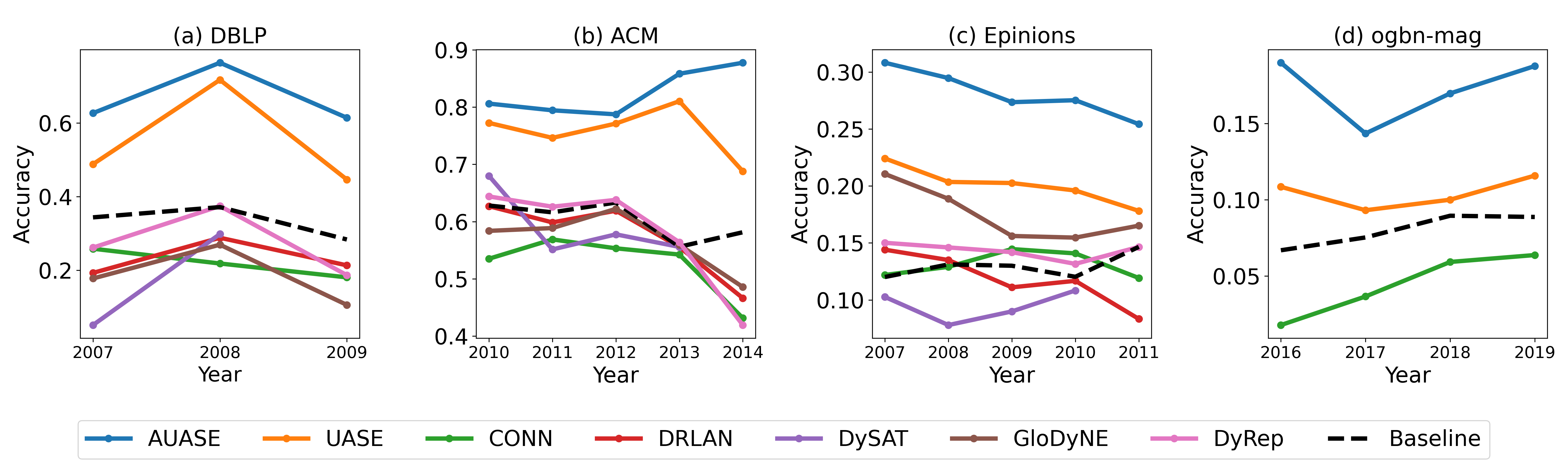} 

    \caption{Node label classification accuracies for DBLP, ACM, Epinions and ogbn-mag datasets for each time interval in the corresponding test datasets. The dynamic embedding techniques outlined in Section~\ref{Sec:Methods} are solid lines, and different colors represents different methods. A basic baseline predicting the most common label is dashed.}\label{fig:dyn_acc}

\end{figure*}
\subsection{Node Classification}\label{sec:node_class}

Node classification aims to predict label categories for current nodes using the historical node embedding data. We train a classifier using XGBoost \citep{chen2016xgboost} on the first 65\% of time intervals for all the datasets and test on the remaining data. We reserve 10\% of the training data for validation to select the weight hyperparameters via cross-validation for AUASE, DRLAN and CONN. Other hyperparameters are set to the default values provided by the original authors.

Figure~\ref{fig:dyn_acc} reports the classification accuracy for each test year, while Table~\ref{tab:static_sc} reports the mean accuracy over all test data. Other metrics were considered and gave equivalent results (see Appendix~\ref{sec:extra_4}, Tables \ref{tab:acm_results}, \ref{tab:epin_results} and \ref{tab:dblp_results}). Because label prediction requires predicting current embeddings, the stability illustrated in Figure \ref{fig:DBLP} allows AUASE to dramatically outperform baselines in terms of accuracy  - 40\% over DyRep on DBLP, 23\% over DySAT on ACM, 11\% over GloDyNE on Epinions and 12\% over DRLAN on ogbn-mag. 
AUASE also performs noticeably better than UASE, suggesting that the inclusion of covariate information is improving the quality of the dynamic embedding. Stability is clearly driving this performance increase, as Figure~\ref{fig:dyn_acc} shows that
no unstable dynamic method consistently outperforms the baseline classifier of predicting the most common class.

\begin{table*}[h]
    \centering
    \caption{AUC for link prediction, average accuracy for node classification and time to compute the dynamic embedding for each baseline.  Errors show the 90\% CI, the best performance is {\bf bold}, second \underline{underlined}.}
    \label{tab:static_sc}
    \begin{tabular}{c|c|c|c|c|c} 
      \hline
    Method & Task & DBLP & ACM &Epinions & ogbn-mag \\
    \hline
    
    \multirow{3}{*}{CONN}&
    Link Pred. &$0.741\pm0.007$ & $0.771\pm0.008$&$0.602\pm0.072$ & -\\
    & Node Class. & 0.219&0.526& 0.131 & -\\
    & Time & $\approx$1h & $\approx$1h & $\approx$1h & -\\
    \hline
    \multirow{3}{*}{GloDyNE}
    & Link Pred. & $0.883\pm 0.004$ & $\mathbf{0.907}\pm0.005$& $0.729\pm0.016$& -\\
    & Node Class. & 0.185 &0.568 &   0.175 & -\\   
     & Time& $\approx$10m&$\approx$1h &$\approx$1h& -\\
    \hline
    \multirow{3}{*}{DRLAN}
    & Link Pred. & $0.575\pm0.018$ & $ 0.641\pm0.015$&$0.626\pm0.081$ & $\underline{0.939}\pm0.009$ \\
    & Node Class. &0.232 & 0.573& 0.118 & 0.044 \\           
    & Time & $\approx$1s &  $\approx$1s & $\approx$1s & $\approx$ 10m\\
    \hline
    \multirow{3}{*}{DyRep} 
    & Link Pred. & $0.511\pm0.014$ & $0.552\pm0.010$ &$0.548\pm0.022$ & -\\
    & Node Class. & 0.272& 0.578 &0.143 & -\\
    & Time & $\approx$5h & $\approx$40h& $\approx$48h & -\\
    \hline
    \multirow{3}{*}{DySAT} 
    & Link Pred. &  $0.802\pm0.018$& $0.758\pm0.023$&$0.596\pm0.107$& -\\
    & Node Class. & 0.175& 0.591& 0.095& -\\
    & Time & $\approx$1h & $\approx$5h&$\approx$3h & -\\
    \hline
    \multirow{3}{*}{UASE} 
    & Link Pred. &  \underline{$0.907$}$\pm0.003$ & $\underline{0.896}\pm0.003$& $\underline{0.806}\pm0.009$ & $0.911\pm0.002$\\  
    & Node Class. & \underline{0.550}& \underline{0.758} & \underline{0.201} & \underline{0.140}\\
    & Time &$\approx$1s&  $\approx$5s & $\approx$1s & $\approx$30m\\
    \hline
    \multirow{3}{*}{AUASE}
    & Link Pred. & $\mathbf{0.915}\pm0.004$ & $\underline{0.896}\pm0.005   $&$\mathbf{0.809}\pm0.009$ &$\mathbf{0.954}\pm0.003$\\
    & Node Class. &\textbf{0.668} & \textbf{0.825}& \textbf{0.281}&\textbf{0.173} \\                
    & Time & $\approx$5s & $\approx$5s& $\approx$5s & $\approx$30m \\
    \hline

    \end{tabular}
\end{table*}

\subsection{Link Prediction}\label{sec:link_pred}

For link prediction, the goal is to predict whether two nodes form an edge in the current network, given historical embeddings. Training data consists of two node embeddings from the same time interval,
\begin{equation*}
    \left( (\hat{\Y}_\A^{(t)})_i, (\hat{\Y}_\A^{(t)})_j \right).
\end{equation*}
These are assigned a positive label if there exists an edge between the two nodes at time $t+1$, namely, $\+A_{ij}^{(t+1)} = 1$, otherwise, they are assigned a negative label, balanced so the classes have the same size. We train a binary classifier using XGBoost \citep{chen2016xgboost} on the first $T-2$ intervals for all the datasets, test on the remaining data and repeat the sampling 10 times.

Table~\ref{tab:static_sc} reports the area under the receiver operating characteristic curve which lies in the interval $[0,1]$ with higher values indicating better prediction. Stability allows using more history in the classifier, which helps AUASE outperform nearly all of its competitors. However, there is still strong information about edge probabilities in unstable embeddings, allowing GloDyNE to perform marginally better for the ACM dataset. There is less distinction between AUASE and UASE which suggests that covariate information is not as beneficial for this task on these data.

\begin{figure}[h]
\centering
  \includegraphics[width=\columnwidth]{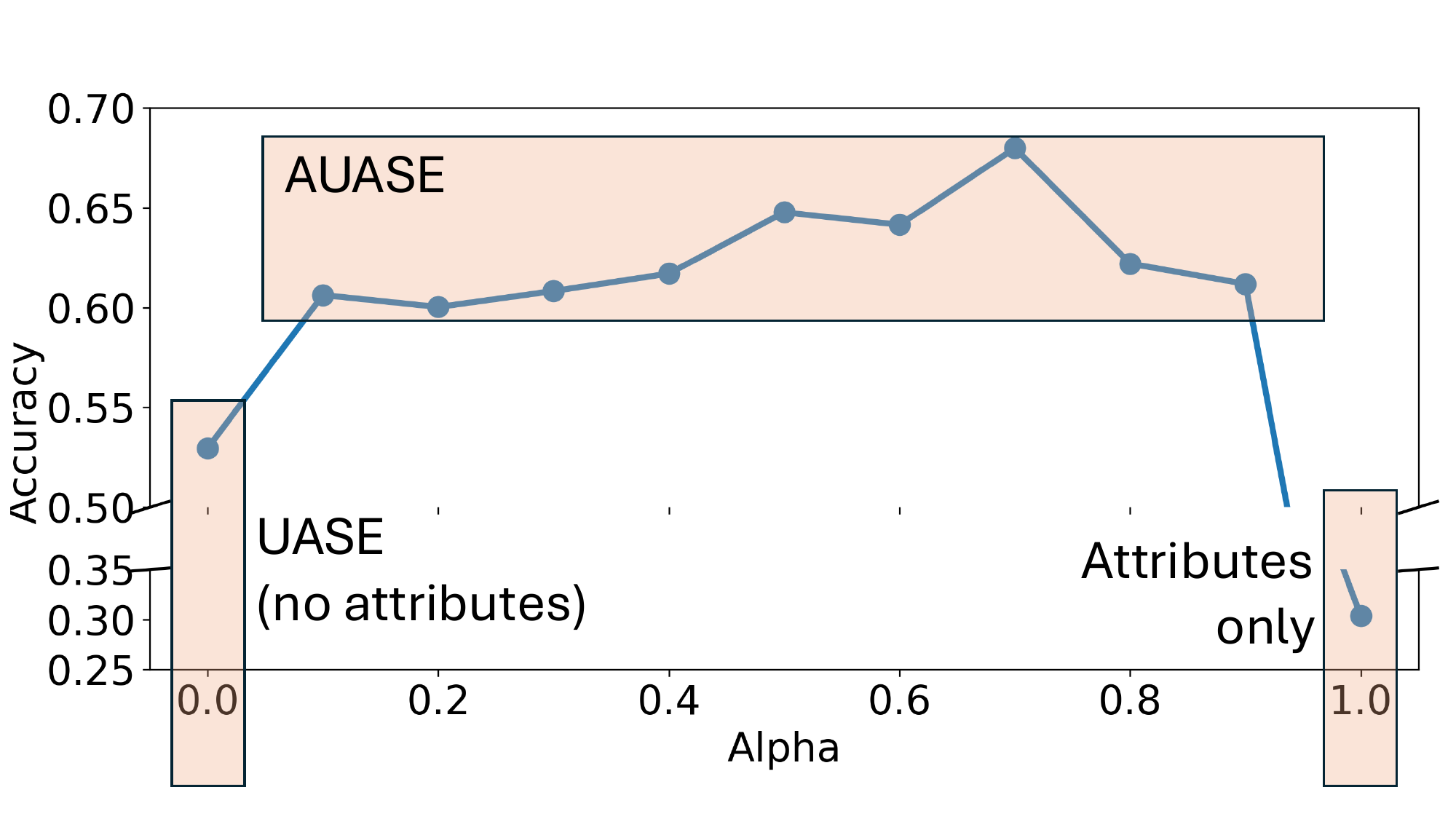}
  
  \caption{AUASE is not sensitive to the attribute weight hyperparameter $\alpha$ as shown by accuracy on DBLP dataset.}\label{fig:sens_alpha}
\end{figure}

\subsection{Parameter Sensitivity} \label{sec:alpha}

In an unsupervised setting, there are many proposed heuristics for hyperparameter estimation, e.g., \cite{tsitsulin2023unsupervised}, which might result in different choices of $\alpha$ in AUASE. Importantly, the \emph{information} contained in the embedding is not sensitive to this choice. To demonstrate this, Figure~\ref{fig:sens_alpha} presents the prediction accuracy for node classification on AUASE embeddings of the DBLP dataset. 

AUASE performs better than competing methods shown in Figure~\ref{fig:dyn_acc}a for any $\alpha \in [0.1,0.9]$ for which attributes and network are jointly embedded, and fine-tuning increases accuracy by 15\% compared to UASE. Since AUASE is computationally efficient, a fast exploration of different hyperparameter choices is possible. An equivalent analysis on link prediction gives similar results (see Appendix~\ref{sec:extra_4}, Table~\ref{tab:alpha_sens}).

\section{Conclusion}
By combining provable stability over time with attributed embedding, we have shown that AUASE learns intrinsically `better' dynamic attributed embeddings that solve problems that all previous (i.e., unstable) methods cannot. This empirically improves predictive performance for node classification and link prediction problems, it is expected to help with unsupervised tasks and aids interpretability.

A limitation of our method is the choice of embedding dimension and of the weight hyperparameter to balance the contribution of network features and attributes. While these are common challenges for unsupervised methods, we have a solution for supervised tasks and demonstrated AUASE robustness to the choice of $\alpha$.

Future research includes exploring non-linear solutions to attributed dynamic embeddings exploiting AUASE stability properties \citep{davis2023simplepowerfulframeworkstable}. Extending AUASE to deal with continuous-time network data \citep{modell2024intensityprofileprojectionframework} could also be valuable.
\begin{acknowledgements}
Emma Ceccherini gratefully acknowledges support by the Centre for
Doctoral Training in Computational Statistics and Data Science (Compass, EPSRC Grant number
EP/S023569/1). This work was carried out using the computational facilities of the Advanced
Computing Research Centre, University of Bristol - \url{http://www.bris.ac.uk/acrc/}.
Andrew Jones gratefully acknowledges support by the NeST EPSRC programme grant EP/X002195/1. 
\end{acknowledgements}
\clearpage

\bibliography{uai2025-template}

\clearpage

\onecolumn

\title{Supplementary Material}
\maketitle

\appendix

\section{Proof of Theorem \ref{2toinf}}\label{sec:proof1}

Note that row and column permutations of a matrix leave its singular values unchanged, while simply permuting the entries of its left- and right-singular vectors.  Thus, for ease of exposition, we may assume without loss of generality that our matrices $\m{A}_C$ and $\m{P}_C$ take the form
$$\m{A}_C  = \left(\begin{array}{cc}(1-\alpha)\m{A}&\alpha\m{C}\\\alpha\m{C}_*&\m{0}_{p \times Tp}\end{array}\right) \mand \m{P}_C  = \left(\begin{array}{cc}(1-\alpha)\m{P}&\alpha\m{D}\\\alpha\m{D}_*&\m{0}_{p \times Tp}\end{array}\right),$$ 
 where $\m{C} \in \mb{R}^{n \times Tp}$ and $\m{C}_* \in \mb{R}^{p \times Tn}$ are defined by $$\m{C} = \left(\m{C}^{(1)}\,|\,\ldots\,|\,\m{C}^{(T)}\right) \mand \m{C}_* = \left(\m{C}^{(1)\top}\,|\,\ldots\,|\,\m{C}^{(T)\top}\right),$$ and $\m{D}$ and $\m{D}_*$ are the expectation matrices of $\m{C}$ and $\m{C}_*$ respectively.

\begin{proposition}\label{prop8}$\|\m{A}_C - \m{P}_C\| = \OP\left(T^{1/2}\rho^{1/2}n^{1/2}\log^{1/2}(n)\,r_\alpha(1,1/\beta)\right)$.\end{proposition}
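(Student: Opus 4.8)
The plan is to exploit the block structure of $\m{A}_C - \m{P}_C$ and bound the adjacency and covariate contributions separately. Writing
$$\m{A}_C - \m{P}_C = \left(\begin{array}{cc}(1-\alpha)(\m{A}-\m{P}) & \alpha(\m{C}-\m{D})\\ \alpha(\m{C}_*-\m{D}_*) & \m{0}_{p\times Tp}\end{array}\right),$$
I would split this as the sum of the matrix supported on the $(1,1)$ block and the matrix supported on the two off-diagonal blocks. The first has spectral norm exactly $(1-\alpha)\|\m{A}-\m{P}\|$. The second has zero diagonal blocks and off-diagonal blocks $B=\alpha(\m{C}-\m{D})$ and $C=\alpha(\m{C}_*-\m{D}_*)$; for such a matrix $M$ one has $MM^\top$ block-diagonal with blocks $BB^\top$ and $CC^\top$, so the singular values of $M$ are exactly those of $B$ together with those of $C$, whence its spectral norm equals $\alpha\max\{\|\m{C}-\m{D}\|,\|\m{C}_*-\m{D}_*\|\}$. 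By the triangle inequality it then suffices to control $\|\m{A}-\m{P}\|$, $\|\m{C}-\m{D}\|$ and $\|\m{C}_*-\m{D}_*\|$ individually.

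For the adjacency term, $\m{A}-\m{P}$ is an $n\times Tn$ matrix whose entries are independent across the $T$ layers and, within each symmetric layer, independent above the diagonal, mean-zero, bounded in $[-1,1]$, with variances $\O(\rho)$ (the Bernoulli edge probabilities being $\rho f(\cdot,\cdot)=\O(\rho)$). I would invoke the standard concentration bound for the unfolded adjacency matrix of a multilayer random dot product graph — obtained via a matrix Bernstein inequality applied to the Hermitian dilation, with Assumption~\ref{sparsity ass} ensuring the variance term $\O(T\rho n)$ dominates the boundedness term — to conclude $\|\m{A}-\m{P}\| = \OP(T^{1/2}\rho^{1/2}n^{1/2}\log^{1/2}(n))$ \citep{jones2021multilayer}. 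This already settles the case $\alpha=0$ (i.e.\ UASE) and produces the coefficient $(1-\alpha)$ of $r_\alpha$.

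For the covariate terms, recall that in the attributed model the entries of $\m{C}^{(t)}$ are distributed as $\rho^{1/2}f_\ell(Z_i^{(t)})$, so, rescaling Assumption~\ref{exp tail} by $\rho^{1/2}$, the entries of $\m{C}-\m{D}$ and of $\m{C}_*-\m{D}_*$ are independent conditional on the latent positions, mean-zero, and subexponential with scale $\O(\rho^{1/2}/\beta)$. Since $p=\O(1)$ by Assumption~\ref{n cov ass}, I would bound the spectral norm crudely by the Frobenius norm: both $\|\m{C}-\m{D}\|_F^2$ and $\|\m{C}_*-\m{D}_*\|_F^2$ equal the same sum $\sum_{t\in[T],\,i\in[n],\,\ell\in[p]}\bigl((\m{C}^{(t)}-\m{D}^{(t)})_{i\ell}\bigr)^2$ of $nTp$ independent, nonnegative, light-tailed terms, which concentrates about its mean $\O(nTp\rho/\beta^2)$ by a Bernstein-type inequality; hence
$$\max\{\|\m{C}-\m{D}\|,\|\m{C}_*-\m{D}_*\|\}\;\le\;\|\m{C}-\m{D}\|_F \vee \|\m{C}_*-\m{D}_*\|_F \;=\; \OP\bigl(T^{1/2}\rho^{1/2}n^{1/2}/\beta\bigr),$$
absorbing the $p=\O(1)$ factor. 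Combining the three estimates through the triangle inequality above, and using $1\le\log^{1/2}(n)$ to put the covariate term on the common rate, gives
$$\|\m{A}_C-\m{P}_C\| = \OP\bigl(T^{1/2}\rho^{1/2}n^{1/2}\log^{1/2}(n)\,[\,(1-\alpha)+\alpha/\beta\,]\bigr) = \OP\bigl(T^{1/2}\rho^{1/2}n^{1/2}\log^{1/2}(n)\,r_\alpha(1,1/\beta)\bigr),$$
as claimed. The step I expect to be the main obstacle is the covariate estimate: the attribute fluctuations are only subexponential, not sub-Gaussian, so one must either truncate them at a well-chosen level before applying matrix Bernstein, or — as above — exploit $p=\O(1)$ to reduce to a scalar sub-Weibull sum; either way, tracking the exact powers of $\rho$ and the precise $1/\beta$ dependence so that they assemble into $r_\alpha(1,1/\beta)$ is the delicate bookkeeping.
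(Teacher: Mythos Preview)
Your proposal is correct and arrives at the stated bound, but your treatment of the covariate blocks differs from the paper's. The overall skeleton---the block decomposition, the triangle inequality yielding $(1-\alpha)\|\m{A}-\m{P}\| + \alpha\max\{\|\m{C}-\m{D}\|,\|\m{C}_*-\m{D}_*\|\}$, and the citation of \cite{jones2021multilayer} for the adjacency part---matches the paper exactly. For the covariate norms, however, the paper does not pass through the Frobenius norm. Instead it writes $\mc{S}(\m{C}-\m{D})$ as a sum of independent matrices indexed by $(i,\ell)$, bounds their moments via Assumption~\ref{exp tail} by comparison with an exponential law, and applies the subexponential matrix Bernstein inequality of \cite{tropp2012tails}, obtaining $\|\m{C}-\m{D}\| = \OP(\beta^{-1}\rho^{1/2}n^{1/2}\log^{1/2}(n))$ (note: no $T^{1/2}$) and $\|\m{C}_*-\m{D}_*\| = \OP(\beta^{-1}T^{1/2}\rho^{1/2}n^{1/2}\log^{1/2}(n))$. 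Your Frobenius argument is more elementary---it sidesteps matrix concentration entirely by exploiting Assumption~\ref{n cov ass} ($p=\O(1)$), and a plain Markov bound on $\mathbb{E}\|\m{C}-\m{D}\|_F^2 = \O(nTp\rho/\beta^2)$ already gives the $\OP$ statement you need without invoking Bernstein for the sub-Weibull squares. The trade-off is that the paper's route yields a sharper individual estimate on $\|\m{C}-\m{D}\|$ and would survive mild growth in $p$, whereas your bound is tied to $p=\O(1)$; since the proposition only asks for the maximum of the two covariate norms and $\|\m{C}_*-\m{D}_*\|$ carries the $T^{1/2}$ anyway, this sharpness is not needed here and both routes give the same final rate.
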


\begin{proof} Condition on a choice of latent positions.  A standard application of the triangle inequality then tells us that
$$\|\m{A}_C-\m{P}_C\| \leq (1-\alpha)\|\m{A}-\m{P}\| + \alpha\max\left\{\|\m{C}-\m{D}\|,\|\m{C}_*-\m{D}_*\|\right\}.$$

By Proposition 8 of \cite{jones2021multilayer} we know that $\|\m{A}-\m{P}\| = \OP\left(\rho^{1/2}T^{1/2}n^{1/2}\log^{1/2}(n)\right)$, so it suffices to find a bound for the spectral norm of the centred covariate matrices.

We use the following notation:
\begin{itemize}
\item Let $\m{E}^{(t)}_{il}$ and $\m{F}_i$ denote the $n \times Tp$ (respectively $(n+Tp) \times (n+Tp)$) matrix with $(i,(t-1)p+l)$th (respectively $(i,i)$th) entry equal to $1$ and all other entries equal to $0$.
\item For any matrix $\m{M} \in \mb{R}^{n \times Tp}$, the symmetric dilation of $\m{M}$ is the matrix $\mc{S}(\m{M}) \in \mb{R}^{(n+Tp) \times (n+Tp)}$ given by $$\mc{S}(\m{M}) = \left(\begin{array}{cc}\m{0}_{n \times n}&\m{M}\\\m{M}^\top&\m{0}_{Tp \times Tp}\end{array}\right).$$
\end{itemize}

Note that $$\mc{S}(\m{E}^{(t)}_{il})^k = \left\{\begin{array}{cc}\m{F}_i + \m{F}_{n+(t-1)p+l}&k~\mr{even}\\\mc{S}(\m{E}^{(t)}_{il})&k~\mr{odd}\end{array}\right.$$ for any $i \in [n]$ and $l \in [p]$, and that the matrix $\m{F}_i + \m{F}_{n+(t-1)p+l} - \mc{S}(\m{E}^{(t)}_{il})$ is positive semi-definite, since for any $\m{x} \in \mb{R}^{n+Tp}$ we have $$\m{x}^\top[\m{F}_i+\m{F}_{n+(t-1)p+l}-\mc{S}(\m{E}^{(t)}_{il})]\m{x} = (\m{x}_i - \m{x}_{n+(t-1)p+l})^2 \geq 0.$$

Thus in particular $\mc{S}(\m{E}^{(t)}_{il})^k \preccurlyeq \m{F}_i + \m{F}_{n+(t-1)p+l}$ for all $k$, where $\preccurlyeq$ denotes the standard partial order on positive semi-definite matrices.

Now, note that we can write $$\mc{S}(\m{C} - \m{D}) = \sum_{i=1}^n\sum_{l=1}^p\left[\sum_{t=1}^T\rho^{1/2}(f_l(Z_i^{(t)})-\mb{E}[f_l(Z_i^{(t)})])\mc{S}(\m{E}^{(t)}_{il})\right]$$ where each of the bracketed terms in the sum is an independent matrix.

Let $$\m{M}_{il} = \sum_{t=1}^T f_l(Z_i^{(t)})-\mb{E}[f_l(Z_i^{(t)})])\mc{S}(\m{E}^{(t)}_{il}).$$  By Assumption \ref{exp tail} and comparison with the exponential distribution we see that $$\mb{E}[\m{M}_{il}^k] \preccurlyeq \frac{2k!}{\beta^k}\left[\sum_{t=1}^T \mc{S}(\m{E}^{(t)}_{il})^2\right]$$ for any $k \geq 2$ (where we have used the fact that $\mc{S}(\m{E}^{(s)}_{il})\mc{S}(\m{E}^{(t)}_{il}) = 0$ if $s \neq t$) and thus we may apply a subexponential version of matrix Bernstein (Theorem 6.2 in \cite{tropp2012tails}) to find that
$$\mb{P}\left(\lambda_{\mr{max}}\left(\mc{S}(\m{C}-\m{D})\right) \geq t\right) \leq (n+p)\exp\left(\frac{-t^2/2\rho}{4\sigma^2/\beta^2 + t/\beta\rho^{1/2}}\right)$$ where $$\sigma^2 = \left\|\sum_{i=1}^n\sum_{j=1}^p \left[\sum_{t=1}^T\mc{S}(\m{E}^{(t)}_{ij})^2\right]\right\| = \left\|\sum_{i=1}^n\sum_{l=1}^p \left[T\m{F}_i + \sum_{t=1}^T\m{F}_{n+(t-1)p+l}\right]\right\|.$$

We find that the sum over all $i \in [n]$ and $l \in [p]$ is a diagonal matrix whose first $n$ entries are equal to $Tp$ and remaining $Tp$ entries are equal to $n$, and thus $\sigma^2 = n$, and so we deduce that $$\lambda_{\mr{max}}\left(\mc{S}(\m{C}-\m{D})\right) = \OP\left( \tfrac{1}{\beta}\rho^{1/2}n^{1/2}\log^{1/2}(n)\right).$$

Since the spectral norm of any matrix is equal to the greatest eigenvalue of its symmetric dilation, we therefore conclude that $$\|\m{C}-\m{D}\| = \OP\left(\tfrac{1}{\beta} \rho^{1/2}n^{1/2}\log^{1/2}(n)\right).$$

An identical argument shows that $\|\m{C}_* - \m{D}_*\| = \OP\left(\tfrac{1}{\beta}T^{1/2}\rho^{1/2}n^{1/2}\log^{1/2}(n)\right)$, from which the result follows.

Thus, taking a union bound and integrating over all possible sets of latent positions gives us our desired result.
\end{proof}

\begin{corollary}\label{coro10} The non-zero singular values $\sigma_i(\m{A}_C)$ satisfy $$\sigma_i(\m{A}_C) = \ThetaP\left(T^{1/2} \rho n\right)$$ for each $i \in [d]$. \end{corollary}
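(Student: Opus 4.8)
The plan is to deduce this from a perturbation argument comparing the singular values of $\m{A}_C$ with those of its expectation $\m{P}_C$. The key tool is Weyl's inequality for singular values, which states that $\lvert \sigma_i(\m{A}_C) - \sigma_i(\m{P}_C)\rvert \leq \|\m{A}_C - \m{P}_C\|$ for every $i$. I would invoke this together with Proposition \ref{prop8}, which controls the spectral norm of the centred matrix, and Assumption \ref{sing values P ass}, which pins down the order of $\sigma_i(\m{P}_C)$ for $i \in [d]$. Combining these gives, for each such $i$,
$$\sigma_i(\m{A}_C) = \sigma_i(\m{P}_C) + \OP\!\left(T^{1/2}\rho^{1/2}n^{1/2}\log^{1/2}(n)\,r_\alpha(1,1/\beta)\right), \qquad \sigma_i(\m{P}_C) = \Theta\!\left(T^{1/2}\rho n\right).$$

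The only thing left to check is that the perturbation term is of strictly smaller order than the signal $T^{1/2}\rho n$, so that the $\Theta$ behaviour of $\sigma_i(\m{P}_C)$ is inherited by $\sigma_i(\m{A}_C)$. Dividing through by $T^{1/2}\rho n$, the relative size of the error is $\rho^{-1/2}n^{-1/2}\log^{1/2}(n)\,r_\alpha(1,1/\beta)$. Since $r_\alpha(1,1/\beta)$ is a fixed constant not depending on $n$, and Assumption \ref{sparsity ass} gives $\rho n = \omega(\log^k(n))$ for some constant $k$, this ratio tends to $0$. Hence $\sigma_i(\m{A}_C)/(T^{1/2}\rho n)$ is bounded above and below by positive constants with probability tending to one, which is exactly the claim $\sigma_i(\m{A}_C) = \ThetaP(T^{1/2}\rho n)$.

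There is no serious obstacle here; the argument is a routine application of Weyl's inequality. The only mild bookkeeping point is to make sure the high-probability statement in Assumption \ref{sing values P ass} is composed correctly with the $\OP$ bound from Proposition \ref{prop8} (a union bound over the relevant events suffices), and to confirm the order comparison using the network sparsity assumption — but both are immediate.
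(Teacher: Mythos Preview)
Your proposal is correct and matches the paper's own proof essentially line for line: the paper also invokes Weyl's inequality to get $\sigma_i(\m{P}_C) - \|\m{A}_C-\m{P}_C\| \leq \sigma_i(\m{A}_C) \leq \sigma_i(\m{P}_C) + \|\m{A}_C-\m{P}_C\|$ and then appeals to Assumption~\ref{sing values P ass} and Proposition~\ref{prop8}. Your extra paragraph verifying that the perturbation is of strictly smaller order (via Assumption~\ref{sparsity ass}) is a detail the paper leaves implicit, but it is correct and makes the argument more self-contained.
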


\begin{proof} A corollary of Weyl's inequalities (\cite{horn2012analysis}, Corollary 7.3.5) states that $|\sigma_i(\m{A}_C)-\sigma_i(\m{P}_C)| \leq \|\m{A}_C-\m{P}_C\|$ for all $i$, and so in particular $$\sigma_i(\m{P}_C) - \|\m{A}_C-\m{P}_C\|\leq \sigma_i(\m{A}_C) \leq \sigma_i(\m{P}_C) + \|\m{A}_C-\m{P}_C\|$$
and the result follows directly from Assumption \ref{sing values P ass} and Proposition \ref{prop8}.
\end{proof}

\begin{proposition}\label{prop13} The following bounds hold:
\begin{enumerate}[label=\normalfont(\roman*)]
\item $\|\Uu{A}\Uu{A}^\top - \Uu{P}\Uu{P}^\top\|$, $\|\Vv{A}\Vv{A}^\top - \Vv{P}\Vv{P}^\top\| =  \OP\left(\frac{\log^{1/2} (n)\,r_\alpha(1,1/\beta)}{\rho^{1/2} n^{1/2}}\right)$.
\item $\|\Uu{P}^\top(\m{A}_C-\m{P}_C)\Vv{P}\|_{\mr{Frob}} = \OP\left(\log^{1/2}(n)\,r_\alpha(1,\rho^{1/2}/\beta)\right)$.
\item $\|\Uu{P}^\top(\m{A}_C-\m{P}_C)\Vv{A}\|_{\mr{Frob}}$, $\|\Uu{A}^\top(\m{A}_C-\m{P}_C)\Vv{P}\|_{\mr{Frob}} = \OP\left(T^{1/2}\log(n)\,r_\alpha(1,1/\beta)^2\right)$.
\item $\|\Uu{P}^\top\Uu{A} - \Vv{P}^\top\Vv{A}\|_{\mr{Frob}} = \OP\left(\frac{\log(n)\,r_\alpha(1,1/\beta)^2}{\rho n}\right)$.
\end{enumerate}
\end{proposition}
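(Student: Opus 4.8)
The plan is to prove Proposition~\ref{prop13} by adapting the matrix-perturbation machinery used for the multilayer random dot product graph in \cite{jones2021multilayer}, feeding in the new spectral-norm bound from Proposition~\ref{prop8} and the singular-value control from Corollary~\ref{coro10} wherever the covariate block changes the rates. The backbone for each part is the same: express the quantity of interest in terms of the ``residual'' $\m{A}_C - \m{P}_C$ and the spectral projectors $\Uu{A}\Uu{A}^\top,\Vv{A}\Vv{A}^\top$ (or their noise-free counterparts), then bound using a Davis--Kahan / sin-$\Theta$ argument together with Proposition~\ref{prop8} and the fact that $\sigma_d(\m{P}_C) = \Theta(T^{1/2}\rho n)$.

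For part (i), I would invoke a Davis--Kahan-type bound: $\|\Uu{A}\Uu{A}^\top - \Uu{P}\Uu{P}^\top\| \lesssim \|\m{A}_C - \m{P}_C\|/\sigma_d(\m{P}_C)$, and likewise for the right singular subspaces; substituting the rate $T^{1/2}\rho^{1/2}n^{1/2}\log^{1/2}(n)\,r_\alpha(1,1/\beta)$ from Proposition~\ref{prop8} and dividing by $\Theta(T^{1/2}\rho n)$ yields exactly $\OP\big(\log^{1/2}(n)\,r_\alpha(1,1/\beta)/(\rho^{1/2}n^{1/2})\big)$. Parts (ii) and (iii) are the genuinely new computations. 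For (ii), $\|\Uu{P}^\top(\m{A}_C-\m{P}_C)\Vv{P}\|_{\mr{Frob}}$ is a $d\times d$ matrix, so it suffices to control each bilinear form $\m{u}^\top(\m{A}_C - \m{P}_C)\m{v}$ for fixed columns of $\Uu{P},\Vv{P}$; splitting $\m{A}_C - \m{P}_C$ into its $(1-\alpha)(\m{A}-\m{P})$ block and its $\alpha$-scaled covariate blocks, the adjacency part is handled exactly as in \cite{jones2021multilayer} (giving the ``$1$'' in $r_\alpha$), while the covariate part is a sum of independent subexponential terms (Assumption~\ref{exp tail}) weighted by bounded entries of the singular vectors --- a Bernstein bound gives the $\rho^{1/2}/\beta$ scaling, hence the combined factor $r_\alpha(1,\rho^{1/2}/\beta)$. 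For (iii) the extra $\Vv{A}$ (or $\Uu{A}$) is itself data-dependent, so I would first replace it by $\Vv{P}$ at the cost of a $\|\Vv{A}\Vv{A}^\top - \Vv{P}\Vv{P}^\top\|$ error bounded by part (i), then combine with (ii): the cross term picks up one extra factor of $\|\m{A}_C-\m{P}_C\|/\sigma_d$, producing the $T^{1/2}\log(n)\,r_\alpha(1,1/\beta)^2$ rate after the $T$-dependence is tracked through $\sigma_d(\m{P}_C) = \Theta(T^{1/2}\rho n)$ and $\|\m{A}_C - \m{P}_C\|$.

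Finally, part (iv) is an identity-plus-perturbation estimate: $\Uu{P}^\top\Uu{A} - \Vv{P}^\top\Vv{A}$ vanishes when $\m{A}_C = \m{P}_C$, and the standard trick (used in the two-to-infinity literature, e.g. via $\Uu{A} = \m{A}_C\Vv{A}\Sig{A}^{-1}$ and $\m{P}_C = \Uu{P}\Sig{P}\Vv{P}^\top$) rewrites the difference in terms of $\Uu{P}^\top(\m{A}_C - \m{P}_C)\Vv{A}$ and $\Uu{A}^\top(\m{A}_C-\m{P}_C)\Vv{P}$ divided by the singular values; plugging in part (iii) and dividing by $\sigma_d(\m{P}_C)\sigma_d(\m{A}_C) = \Theta(T\rho^2n^2)$ gives the stated $\OP\big(\log(n)\,r_\alpha(1,1/\beta)^2/(\rho n)\big)$ (note the $T$ cancels). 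I expect the main obstacle to be part (iii): the bilinear form $\Uu{P}^\top(\m{A}_C-\m{P}_C)\Vv{A}$ mixes a fixed deterministic subspace with the random subspace $\Vv{A}$, so a naive bound loses a factor; the careful route is to decouple via part (i) and then control the residual bilinear form $\Uu{P}^\top(\m{A}_C-\m{P}_C)(\m{A}_C-\m{P}_C)\Vv{P}$-type quantities, which requires a second moment / variance computation on the covariate blocks that is more delicate than the single-matrix Bernstein bound already used in Proposition~\ref{prop8}. Keeping the $\alpha$-dependence packaged consistently as $r_\alpha(\cdot,\cdot)$ throughout, and making sure the $\rho$- and $T$-powers line up with those in \cite{jones2021multilayer}, is the bookkeeping that needs the most care.
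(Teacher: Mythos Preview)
Your plan for parts (i)--(iii) is essentially the paper's own proof: Davis--Kahan for (i), an entrywise concentration argument on the bilinear forms $\m{u}^\top(\m{A}_C-\m{P}_C)\m{v}$ split into adjacency and covariate blocks for (ii), and the decoupling $\Vv{A} = \Vv{P}\Vv{P}^\top\Vv{A} + (\Vv{A}\Vv{A}^\top-\Vv{P}\Vv{P}^\top)\Vv{A}$ for (iii). The obstacle you anticipate in (iii) does not arise: once you write
\[
\Uu{P}^\top(\m{A}_C-\m{P}_C)\Vv{A} \;=\; \Uu{P}^\top(\m{A}_C-\m{P}_C)(\Vv{A}\Vv{A}^\top-\Vv{P}\Vv{P}^\top)\Vv{A} \;+\; \Uu{P}^\top(\m{A}_C-\m{P}_C)\Vv{P}\cdot\Vv{P}^\top\Vv{A},
\]
the first term is bounded by $\|\m{A}_C-\m{P}_C\|$ times part (i) and the second by part (ii); no second-moment computation on $(\m{A}_C-\m{P}_C)(\m{A}_C-\m{P}_C)^\top$-type quantities is needed, and the first of these two pieces already gives the dominant rate $T^{1/2}\log(n)\,r_\alpha(1,1/\beta)^2$.

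There is, however, a concrete error in your sketch of (iv). The relevant identity is the Sylvester-type relation
\[
\Sig{P}\bigl(\Uu{P}^\top\Uu{A}-\Vv{P}^\top\Vv{A}\bigr) + \bigl(\Uu{P}^\top\Uu{A}-\Vv{P}^\top\Vv{A}\bigr)\Sig{A} \;=\; \Uu{P}^\top(\m{A}_C-\m{P}_C)\Vv{A} - \Vv{P}^\top(\m{A}_C-\m{P}_C)^\top\Uu{A},
\]
whose $(i,j)$ entry on the left is $(\sigma_i(\m{P}_C)+\sigma_j(\m{A}_C))\,(\Uu{P}^\top\Uu{A}-\Vv{P}^\top\Vv{A})_{ij}$. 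You therefore divide the bound from (iii) by the \emph{sum} $\sigma_d(\m{P}_C)+\sigma_d(\m{A}_C) = \Theta(T^{1/2}\rho n)$, not the product $\sigma_d(\m{P}_C)\sigma_d(\m{A}_C) = \Theta(T\rho^2 n^2)$. With the product your own arithmetic fails: $T^{1/2}\log(n)\,r_\alpha(1,1/\beta)^2\big/(T\rho^2 n^2) = \log(n)\,r_\alpha(1,1/\beta)^2/(T^{1/2}\rho^2 n^2)$, which is not the claimed rate. Replacing ``product'' by ``sum'' (equivalently, deriving the Sylvester identity rather than naively substituting $\Uu{A} = \m{A}_C\Vv{A}\Sig{A}^{-1}$) repairs the argument and matches the paper exactly.
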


\begin{proof}
\leavevmode
\begin{enumerate}[label=\normalfont(\roman*)]
\item Denoting by $\theta_1, \ldots, \theta_d$ the principal angles between the subspaces spanned by the columns of $\Uu{P}$ and $\Uu{A}$, a variant of the Davis--Kahan theorem (\cite{yu2015daviskahan}, Theorem 4) states that
  $$\max_{i \in [d]}|\sin(\theta_i)| \leq \frac{2d^{\,{1/2}}\left(2\sigma_1(\m{P}_C)+\|\m{A}_C-\m{P}_C\|\right)\|\m{A}_C-\m{P}_C\|}{\sigma_d(\m{P}_C)^2}.$$
By definition, $\theta_i = \cos^{-1}(\sigma_i)$, where the $\sigma_i$ are the singular values of the matrix $\Uu{P}^\top\Uu{A}$. A standard result states that the non-zero eigenvalues of the matrix $\Uu{A}\Uu{A}^\top - \Uu{P}\Uu{P}^\top$ are precisely the $\sin(\theta_i)$ (each occurring twice) and thus the result follows after applying the bounds from Assumption \ref{sing values P ass} and Proposition \ref{prop8}.  An identical argument gives the result for $\|\Vv{A}\Vv{A}^\top - \Vv{P}\Vv{P}^\top\|$.

\item  Condition on a choice of latent positions.  For any $i,j \in [d]$, and for any $t \in [T]$, let $u$ and $v^{(t)}$ denote the $i$th and $j$th columns of $\Uu{P}$ and $\Vv{P}^{(t)}$ respectively, so that
  $$\bigl(\Uu{P}^\top(\m{A}_C-\m{P}_C)\Vv{P}\bigr)_{ij} = (1-\alpha)(\m{T}_1 + \m{T}_2) + \alpha\m{T}_3,$$
  where
  \begin{align*}
    \m{T}_1 &= \sum_{t=1}^T\sum_{k=2}^n\sum_{l=1}^{k-1}(u_kv^{(t)}_l+u_lv^{(t)}_k)\left(\m{A}^{(t)}_{kl}-\m{P}^{(t)}_{kl}\right)\\
    \m{T}_2 &= \sum_{t=1}^T \sum_{k=1}^n u_kv^{(t)}_k\left(\m{A}^{(t)}_{kk}-\m{P}^{(t)}_{kk}\right)\\
    \m{T}_3 &= \sum_{t=1}^T\sum_{k=1}^n\sum_{l=1}^p \left(u_kv^{(t)}_{n+l}+u_{n+l}v^{(t)}_k\right)\left(\m{C}_{kl}^{(t)}-\m{D}_{kl}^{(t)}\right).
  \end{align*}
  
  The term $\m{T}_2$ can be seen to be $\OP(\rho T)$, and so we can disregard it for the purposes of our analysis.  The terms in the remaining two sums are independent zero-mean random variables, where the terms in $\m{T}_1$ are bounded in absolute value by $|u_kv^{(t)}_l+u_lv^{(t)}_k|$ and so we can apply Hoeffding's inequality to find that $|\m{T}_1| = \OP(\log^{1/2}(n))$ (noting that since $u$ and $v$ have norm at most $1$, the sum of the terms $|u_kv^{(t)}_l+u_lv^{(t)}_k|^2$ is at most $2$).  Similarly, one can use the fact that the sum of the terms $|u_kv^{(t)}_{n+l}+u_{n+l}v^{(t)}_k|^2$ is at most $2$ and apply Assumption \ref{exp tail} to find that $|\m{T}_3| = \OP(\tfrac{1}{\beta}\rho^{1/2}\log^{1/2}(n))$, and so deduce that
$$
  \left|(\Uu{P}^\top(\m{A}^{(t)}_C-\m{P}^{(t)}_C)\Vv{P})_{ij}\right| = \OP\left(\log^{1/2}(n)\,r_\alpha(1,\rho^{1/2}/\beta)\right).
$$ The result is then obtained by taking a union bound over all $i$, $j$ and integrating over all possible choices of latent positions.
\item Observe that, since $\Vv{A}^\top\Vv{A} = \m{I}_d$, we may write
 $$
   \Uu{P}^\top(\m{A}_C-\m{P}_C)\Vv{A} = \Uu{P}^\top(\m{A}_C-\m{P}_C)(\Vv{A}\Vv{A}^\top - \Vv{P}\Vv{P}^\top)\Vv{A} + \Uu{P}^\top(\m{A}_C-\m{P}_C)\Vv{P}\Vv{P}^\top\Vv{A}.
 $$

 These terms satisfy 
 $$
   \|\Uu{P}^\top(\m{A}_C-\m{P}_C)(\Vv{A}\Vv{A}^\top - \Vv{P}\Vv{P}^\top)\Vv{A}\|_{\mr{Frob}} = \OP\left(T^{1/2}\log(n)\,r_\alpha(1,1/\beta)^2\right)
 $$ 
 and 
 $$
   \|\Uu{P}^\top(\m{A}-\m{P})\Vv{P}\Vv{P}^\top\Vv{A}\|_{\mr{Frob}} =\OP\left(\log^{1/2}(n)\,r_\alpha(1,\rho^{1/2}/\beta)\right)
 $$
 where we have applied Proposition \ref{prop8}, the results from parts (i) and (ii), and the fact that
 $$
   \|\m{M}\m{N}\|_{\mr{Frob}} \leq \max\bigl\{\|\m{M}\|\|\m{N}\|_{\mr{Frob}},\|\m{N}\|\|\m{M}\|_{\mr{Frob}}\bigr\}
 $$
 for any commensurate matrices $\m{M}$ and $\m{N}$.  The first of these two terms dominates, from which the result follows, and an identical argument bounds the term $\|\Uu{A}^\top(\m{A}-\m{P})\Vv{P}\|_{\mr{Frob}}$.

\item Note that
  $$
    \Sig{P}(\Uu{P}^\top\Uu{A}-\Vv{P}^\top\Vv{A}) + (\Uu{P}^\top\Uu{A}-\Vv{P}^\top\Vv{A})\Sig{A} = \Uu{P}^\top(\m{A}_C-\m{P}_C)\Vv{A} - \Vv{P}^\top(\m{A}_C-\m{P}_C)^\top\Uu{A}
  $$
  and for any $i, j \in [d]$ the $(i,j)$th entry of the left-hand matrix is equal to
  $$\bigr(\sigma_i(\m{P}_C) + \sigma_j(\m{A}_C)\bigl)(\Uu{P}^\top\Uu{A}-\Vv{P}^\top\Vv{A})_{ij}.$$
  Thus
  \begin{align*}
    \left|(\Uu{P}^\top\Uu{A}-\Vv{P}^\top\Vv{A})_{ij}\right|^2 &= \frac{\left|(\Uu{P}^\top(\m{A}_C-\m{P}_C)\Vv{A} - \Vv{P}^\top(\m{A}_C-\m{P}_C)^\top\Uu{A})_{ij}\right|^2}{\left(\sigma_i(\m{P}_C) + \sigma_j(\m{A}_C)\right)^2} \\[1em]
    &\leq \frac{2\left(\left|(\Uu{P}^\top(\m{A}_C-\m{P}_C)\Vv{A})_{ij}\right|^2 + \left|(\Vv{P}^\top(\m{A}_C-\m{P}_C)^\top\Uu{A})_{ij}\right|^2\right)}{\left(\sigma_d(\m{P}_C) + \sigma_d(\m{A}_C)\right)^2}
  \end{align*}
  and so
  $$
     \left\|\Uu{P}^\top\Uu{A}-\Vv{P}^\top\Vv{A}\right\|_{\mr{Frob}} \leq \frac{2\max\left\{\left\|\Uu{P}^\top(\m{A}_C-\m{P}_C)\Vv{A}\right\|_{\mr{Frob}},\left\|\Uu{A}^\top(\m{A}_C-\m{P}_C)\Vv{P}\right\|_{\mr{Frob}}\right\}}{\sigma_d(\m{P}_C) + \sigma_d(\m{A}_C)}
  $$
  by summing over all $i, j \in [d]$ and noting that the Frobenius norm is invariant under matrix transposition.  The result then follows by applying part (iii), Proposition \ref{prop8} and Corollary \ref{coro10}.
\end{enumerate}
\end{proof}

\begin{proposition}\label{prop14} Let $\Uu{P}^\top\Uu{A} + \Vv{P}^\top\Vv{A}$ admit the singular value decomposition 
\begin{align*}
	\Uu{P}^\top\Uu{A} + \Vv{P}^\top\Vv{A} = \Uu{}\Sig{}\Vv{}^\top,
\end{align*} 
and let $\m{W} = \Uu{}\Vv{}^\top$. Then 
$$
	\max\left\{\|\Uu{P}^\top\Uu{A} - \m{W}\|_{\mr{Frob}}, \|\Vv{P}^\top\Vv{A} - \m{W}\|_{\mr{Frob}}\right\} = \displaystyle\OP\left(\frac{\log(n)\,r_\alpha(1,1/\beta)^2}{\rho n}\right).
$$\end{proposition}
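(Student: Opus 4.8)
The plan is to show that both $M_U := \Uu{P}^\top\Uu{A}$ and $M_V := \Vv{P}^\top\Vv{A}$ lie within the stated distance of the common orthogonal factor $\m{W}$. Write $\epsilon := \log(n)\,r_\alpha(1,1/\beta)^2/(\rho n)$ for the target rate; by Proposition~\ref{prop13}(i) we have $\|\Uu{A}\Uu{A}^\top - \Uu{P}\Uu{P}^\top\|^2 = \OP(\epsilon)$, and by Proposition~\ref{prop13}(iv), $\|M_U - M_V\|_{\mr{Frob}} = \OP(\epsilon)$. Note that $\|M_U\|,\|M_V\| \le 1$, since each is a product of matrices with orthonormal columns, and that, $d$ being fixed, spectral and Frobenius norms of $d \times d$ matrices may be interchanged up to constants throughout.

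The crucial observation — and the step I expect to be the main obstacle — is that each of $M_U, M_V$ is a near-isometry at the \emph{quadratic} rate $\epsilon$, rather than the linear rate $\epsilon^{1/2}$ one would read off naively from $\|M_U^\top M_U - I_d\| = \|\Uu{A}^\top(\Uu{A}\Uu{A}^\top - \Uu{P}\Uu{P}^\top)\Uu{A}\| \le \|\Uu{A}\Uu{A}^\top - \Uu{P}\Uu{P}^\top\|$. The sharp bound comes from writing, using that $I - \Uu{P}\Uu{P}^\top$ is an orthogonal projection,
\[
I_d - M_U^\top M_U = \Uu{A}^\top(I - \Uu{P}\Uu{P}^\top)\Uu{A} = \bigl[(I-\Uu{P}\Uu{P}^\top)\Uu{A}\bigr]^\top\bigl[(I-\Uu{P}\Uu{P}^\top)\Uu{A}\bigr],
\]
so that $\|I_d - M_U^\top M_U\| = \|(I-\Uu{P}\Uu{P}^\top)\Uu{A}\|^2$; since $(I-\Uu{P}\Uu{P}^\top)\Uu{A} = (\Uu{A}\Uu{A}^\top - \Uu{P}\Uu{P}^\top)\Uu{A}$ and $\Uu{A}$ has orthonormal columns, this is at most $\|\Uu{A}\Uu{A}^\top - \Uu{P}\Uu{P}^\top\|^2 = \OP(\epsilon)$, and the identical estimate holds for $\|I_d - M_V^\top M_V\|$. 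Without this quadratic gain the final bound would be off by a factor $\epsilon^{-1/2}$.

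Next I would show that the Gram matrix of $S := \Uu{P}^\top\Uu{A} + \Vv{P}^\top\Vv{A} = M_U + M_V = \Uu{}\Sig{}\Vv{}^\top$ is close to $4 I_d$. Expanding $S^\top S = M_U^\top M_U + M_V^\top M_V + M_U^\top M_V + M_V^\top M_U$ and writing each cross term as, e.g., $M_U^\top M_V = M_U^\top M_U + M_U^\top(M_V - M_U)$, the error $M_U^\top(M_V-M_U)$ has norm at most $\|M_U - M_V\|_{\mr{Frob}} = \OP(\epsilon)$; combined with the previous step, all four summands lie within $\OP(\epsilon)$ of $I_d$, so $\|S^\top S - 4I_d\| = \OP(\epsilon)$. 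In particular $\sigma_{\min}(S)^2 \ge 4 - \OP(\epsilon) > 0$ with probability tending to one, so on that event $S$ is invertible and $\m{W} = \Uu{}\Vv{}^\top$ is a genuine $d\times d$ orthogonal matrix — enough for the $\OP$ conclusion.

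Finally I would transfer this to the singular values. Since $\tfrac12 S - \m{W} = \Uu{}(\tfrac12\Sig{} - I_d)\Vv{}^\top$, we have $\|\tfrac12 S - \m{W}\|_{\mr{Frob}}^2 = \sum_{i=1}^d(\tfrac12\sigma_i(S) - 1)^2$, and for each $i$, Weyl's inequality gives $|\tfrac14\sigma_i(S)^2 - 1| \le \tfrac14\|S^\top S - 4 I_d\| = \OP(\epsilon)$, whence $|\tfrac12\sigma_i(S) - 1| \le |\tfrac14\sigma_i(S)^2 - 1| = \OP(\epsilon)$ using $|\sqrt a - 1| \le |a-1|$ for $a \ge 0$; summing over the fixed number $d$ of indices yields $\|\tfrac12 S - \m{W}\|_{\mr{Frob}} = \OP(\epsilon)$. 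The triangle inequality then closes the argument: $\|M_U - \m{W}\|_{\mr{Frob}} \le \tfrac12\|M_U - M_V\|_{\mr{Frob}} + \|\tfrac12 S - \m{W}\|_{\mr{Frob}} = \OP(\epsilon)$ by Proposition~\ref{prop13}(iv), and symmetrically for $\|M_V - \m{W}\|_{\mr{Frob}}$, which is exactly the claimed bound $\OP\!\left(\log(n)\,r_\alpha(1,1/\beta)^2/(\rho n)\right)$.
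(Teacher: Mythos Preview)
Your argument is correct, and it reaches the same conclusion as the paper via a genuinely different route. The paper introduces an auxiliary orthogonal matrix $\m{W}_* = \Uu{*}\Vv{*}^\top$ from the SVD of $M_U = \Uu{P}^\top\Uu{A}$ alone, observes that $\|M_U - \m{W}_*\|_{\mr{Frob}} = \|\Sig{*} - I_d\|_{\mr{Frob}} \le d^{1/2}\sin^2\theta_d \le d^{1/2}\|\Uu{A}\Uu{A}^\top - \Uu{P}\Uu{P}^\top\|^2$, uses Proposition~\ref{prop13}(iv) and the triangle inequality to get the same bound for $\|M_V - \m{W}_*\|_{\mr{Frob}}$, and then invokes the optimality of $\m{W}$ as the orthogonal Procrustes solution for the sum $\|M_U - \m{Q}\|_{\mr{Frob}}^2 + \|M_V - \m{Q}\|_{\mr{Frob}}^2$ to transfer both bounds to $\m{W}$. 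You instead work directly with $S = M_U + M_V$, show $\|S^\top S - 4I_d\| = \OP(\epsilon)$, and read off $\|\tfrac12 S - \m{W}\|_{\mr{Frob}} = \OP(\epsilon)$ from the singular values before a final triangle-inequality split. The key insight is shared: both proofs exploit the quadratic gain $1 - \cos\theta_i \le \sin^2\theta_i$ (you phrase it via the projection identity $I_d - M_U^\top M_U = [(I-\Uu{P}\Uu{P}^\top)\Uu{A}]^\top[(I-\Uu{P}\Uu{P}^\top)\Uu{A}]$, the paper via $\|\Sig{*} - I_d\| \le \sin^2\theta_d$). The paper's route is slightly shorter by leaning on the Procrustes characterisation, while yours is more self-contained and avoids the auxiliary $\m{W}_*$ at the cost of the explicit $S^\top S$ computation; either is perfectly acceptable.
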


\begin{proof} A standard argument shows that $\m{W}$ satisfies
  $$
    \m{W} = \argmin_{\m{Q} \in \mr{O}(d)} \,\|\Uu{P}^\top\Uu{A} - \m{Q}\|_{\mr{Frob}}^2 + \|\Vv{P}^\top\Vv{A} - \m{Q}\|_{\mr{Frob}}.
  $$
  Let $\Uu{P}^\top\Uu{A} = \Uu{*}\Sig{*}\Vv{*}^\top$ be the singular value decomposition of $\Uu{P}^\top\Uu{A}$, and define $\m{W}_* \in \mathrm{O}(d)$ by $\m{W}_* = \Uu{*}\Vv{*}^\top$. Then, denoting by $\sigma_1, \ldots, \sigma_d$ the singular values of $\Uu{P}^\top\Uu{A}$ and defining $\theta_i = \cos^{-1}(\sigma_i)$ as in Proposition \ref{prop13} (i), we see that
$$
  \|\Uu{P}^\top \Uu{A} - \m{W}_*\|_{\mr{Frob}} = \|\Sig{*} - \m{I}_d \|_{\mr{Frob}} \leq d^{\,{1/2}} \sin^2(\theta_d) \leq d^{\,{1/2}} \|\Uu{A}\Uu{A}^\top - \Uu{P}\Uu{P}^\top\|^2
$$
and so 
$$
	\|\Uu{P}^\top \Uu{A} - \m{W}_*\|_{\mr{Frob}} = \OP\left(\frac{\log(n)\,r_\alpha(1,1/\beta)^2}{\rho n}\right).
$$

Also, 
$$
	\|\Vv{P}^\top \Vv{A} - \m{W}_*\|_{\mr{Frob}} \leq \|\Vv{P}^\top \Vv{A} - \Uu{P}^\top\Uu{A}\|_{\mr{Frob}} + \|\Uu{P}^\top\Uu{A} - \m{W}_*\|_{\mr{Frob}}
$$
	and so 
$$
	\|\Vv{P}^\top \Vv{A} - \m{W}_*\|_{\mr{Frob}} = \OP\left(\frac{\log(n)\,r_\alpha(1,1/\beta)^2}{\rho n}\right).
$$ 
by Proposition \ref{prop13}, part (iv).

Since by definition 
$$
	\|\Uu{P}^\top\Uu{A} - \m{W}\|_{\mr{Frob}}^2 + \|\Vv{P}^\top\Vv{A} - \m{W}\|_{\mr{Frob}}^2 \leq \|\Uu{P}^\top\Uu{A} - \m{W}_*\|_{\mr{Frob}}^2 + \|\Vv{P}^\top\Vv{A} - \m{W}_*\|_{\mr{Frob}}^2 
$$
the result follows.
\end{proof}

\begin{proposition}\label{prop15} The following bounds hold:
\begin{enumerate}[label=\normalfont(\roman*)]
	\item $\|\m{W}\Sig{A} - \Sig{P}\m{W}\|_{\mr{Frob}} = \OP\left(T^{1/2}\log(n)\,r_\alpha(1,1/\beta)^2\right)$.
	\item $\|\m{W}\Sig{A}^{1/2} - \Sig{P}^{1/2}\m{W}\|_{\mr{Frob}} = \OP\left(\frac{T^{1/4}\log(n)\,r_\alpha(1,1/\beta)^2}{\rho^{1/2}n^{1/2}}\right)$.
	\item $\|\m{W}\Sig{A}^{-{1/2}} - \Sig{P}^{-{1/2}}\m{W}\|_{\mr{Frob}} =\OP\left(\frac{\log(n)\,r_\alpha(1,1/\beta)^2}{\rho n}\right)$.
\end{enumerate}
\end{proposition}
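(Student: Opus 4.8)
The plan is to establish (i) first via a Sylvester-type identity and then derive (ii) and (iii) from it by elementary algebra, using only Propositions~\ref{prop8}, \ref{prop13}, \ref{prop14}, Corollary~\ref{coro10}, and Assumption~\ref{sing values P ass}. For (i), I would set $\m{S} = \Uu{P}^\top\Uu{A} + \Vv{P}^\top\Vv{A}$ and expand $\Uu{P}^\top\m{A}_C\Vv{A}$ in two ways: directly via $\m{A}_C\Vv{A} = \Uu{A}\Sig{A}$, and via $\m{A}_C = \m{P}_C + (\m{A}_C - \m{P}_C)$ with $\Uu{P}^\top\m{P}_C = \Sig{P}\Vv{P}^\top$, which gives $\Uu{P}^\top\Uu{A}\Sig{A} = \Sig{P}\Vv{P}^\top\Vv{A} + \Uu{P}^\top(\m{A}_C-\m{P}_C)\Vv{A}$; the transposed manipulation applied to $\Vv{P}^\top\m{A}_C^\top\Uu{A}$ gives $\Vv{P}^\top\Vv{A}\Sig{A} = \Sig{P}\Uu{P}^\top\Uu{A} + \Vv{P}^\top(\m{A}_C-\m{P}_C)^\top\Uu{A}$. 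Adding the two identities yields
\begin{equation*}
\m{S}\Sig{A} - \Sig{P}\m{S} = \Uu{P}^\top(\m{A}_C-\m{P}_C)\Vv{A} + \Vv{P}^\top(\m{A}_C-\m{P}_C)^\top\Uu{A},
\end{equation*}
whose right-hand side is $\OP\!\left(T^{1/2}\log(n)\,r_\alpha(1,1/\beta)^2\right)$ in Frobenius norm by Proposition~\ref{prop13}(iii) (the Frobenius norm is transpose-invariant). To pass from $\m{S}$ to $\m{W}$, I would write $\m{S} = 2\m{W} + \m{R}$, where $\|\m{R}\|_{\mr{Frob}} = \OP\!\left(\log(n)\,r_\alpha(1,1/\beta)^2/(\rho n)\right)$ by Proposition~\ref{prop14}; since $\|\Sig{A}\|,\|\Sig{P}\| = \OP(T^{1/2}\rho n)$ by Corollary~\ref{coro10} and Assumption~\ref{sing values P ass}, the commutator $\m{R}\Sig{A} - \Sig{P}\m{R}$ is again $\OP\!\left(T^{1/2}\log(n)\,r_\alpha(1,1/\beta)^2\right)$. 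Rearranging $\m{S}\Sig{A} - \Sig{P}\m{S} = 2(\m{W}\Sig{A} - \Sig{P}\m{W}) + (\m{R}\Sig{A} - \Sig{P}\m{R})$ then gives (i).

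For (ii), I would set $\m{B} = \m{W}\Sig{A}^{1/2} - \Sig{P}^{1/2}\m{W}$, note that $\Sig{P}^{1/2}\m{B} + \m{B}\Sig{A}^{1/2} = \m{W}\Sig{A} - \Sig{P}\m{W}$, and read off entries: $\bigl(\sigma_i(\m{P}_C)^{1/2} + \sigma_j(\m{A}_C)^{1/2}\bigr)\m{B}_{ij}$ equals the $(i,j)$ entry of $\m{W}\Sig{A} - \Sig{P}\m{W}$, so $\|\m{B}\|_{\mr{Frob}} \le \|\m{W}\Sig{A} - \Sig{P}\m{W}\|_{\mr{Frob}}/\bigl(\sigma_d(\m{P}_C)^{1/2} + \sigma_d(\m{A}_C)^{1/2}\bigr)$; dividing the bound from (i) by $\ThetaP(T^{1/4}\rho^{1/2}n^{1/2})$ gives (ii). For (iii), I would use the factorisation $\m{W}\Sig{A}^{-1/2} - \Sig{P}^{-1/2}\m{W} = -\Sig{P}^{-1/2}\m{B}\Sig{A}^{-1/2}$, bound the two outer factors in operator norm by $\OP\!\bigl((T^{1/2}\rho n)^{-1/2}\bigr)$ each, and apply (ii); this produces the bound $\OP\!\bigl(\log(n)\,r_\alpha(1,1/\beta)^2/(T^{1/4}\rho^{3/2}n^{3/2})\bigr)$, which implies the stated one since $T^{1/4}\rho^{1/2}n^{1/2}\to\infty$ under Assumption~\ref{sparsity ass}.

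The main obstacle is step (i): the probabilistic content is already packaged in Propositions~\ref{prop13} and \ref{prop14}, so the only delicate point is to phrase the commutator identity for $\m{S} = \Uu{P}^\top\Uu{A} + \Vv{P}^\top\Vv{A}$ rather than for $\m{W}$ directly, and then to verify that multiplying the $\OP(1/(\rho n))$ discrepancy of Proposition~\ref{prop14} by the $\OP(T^{1/2}\rho n)$-sized singular value matrices does not degrade the rate — the powers of $\rho$, $n$ and $T$ cancel exactly. Steps (ii) and (iii) are then routine Sylvester-equation and norm bookkeeping.
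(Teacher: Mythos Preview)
Your proposal is correct and follows essentially the same approach as the paper. The only cosmetic difference is in part (i): the paper writes the decomposition directly for $\m{W}$ via the single identity
\[
\m{W}\Sig{A} - \Sig{P}\m{W} = (\m{W}-\Uu{P}^\top\Uu{A})\Sig{A} + \Uu{P}^\top(\m{A}_C-\m{P}_C)\Vv{A} + \Sig{P}(\Vv{P}^\top\Vv{A} - \m{W}),
\]
whereas you symmetrise first through $\m{S}=\Uu{P}^\top\Uu{A}+\Vv{P}^\top\Vv{A}$ and then pass to $\m{W}$; both routes invoke exactly the same ingredients (Proposition~\ref{prop13}(iii), Proposition~\ref{prop14}, Corollary~\ref{coro10}, Assumption~\ref{sing values P ass}). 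Parts (ii) and (iii) are identical to the paper's entrywise argument, and your observation that (iii) in fact yields the sharper rate $\OP\!\bigl(\log(n)\,r_\alpha(1,1/\beta)^2/(T^{1/4}\rho^{3/2}n^{3/2})\bigr)$ is consistent with the paper's own calculation --- the stated bound is simply a looser (but sufficient) upper bound.
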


\begin{proof}
\leavevmode
\begin{enumerate}[label=\normalfont(\roman*)]
	\item Observe that
$$
	\m{W}\Sig{A} - \Sig{P}\m{W} = (\m{W}-\Uu{P}^\top\Uu{A})\Sig{A} + \Uu{P}^\top(\m{A}_C-\m{P}_C)\Vv{A} + \Sig{P}(\Vv{P}^\top\Vv{A} - \m{W})
$$
and so
$$
  \|\m{W}\Sig{A} - \Sig{P}\m{W}\|_{\mr{Frob}} = \OP\left(T^{1/2}\log(n)\,r_\alpha(1,1/\beta)^2\right)
  $$
  by Assumption \ref{sing values P ass}, Corollary \ref{coro10} and Propositions \ref{prop13} and \ref{prop14}.

\item Note that 
$$
	(\m{W}\Sig{A}^{1/2}-\Sig{P}^{1/2} \m{W})_{ij} = \frac{(\m{W}\Sig{A} - \Sig{P}\m{W})_{ij}}{\sigma_j(\m{A})^{1/2} + \sigma_i(\m{P})^{1/2}},
$$
and so the result follows by applying part (i), Assumption \ref{sing values P ass}, Corollary \ref{coro10} and summing over all $i, j \in [d]$.

\item Note that 
$$
  (\m{W}\Sig{A}^{-{1/2}}-\Sig{P}^{-{1/2}} \m{W})_{ij} = \frac{\bigl(\m{W}\Sig{A}^{1/2}-\Sig{P}^{1/2} \m{W}\bigr)_{ij}}{\sigma_i(\m{P})^{1/2}\,\sigma_j(\m{A})^{1/2}}
$$
and so the result follows as in part (ii).
\end{enumerate} 
\end{proof}

\begin{proposition}\label{prop19} Let 
\begin{align*}
    \m{R}_{1,1} &= \Uu{P}(\Uu{P}^\top \Uu{A} \Sig{A}^{1/2} - \Sig{P}^{1/2}\m{W})\\
    \m{R}_{1,2} &= (\m{I}-\Uu{P}\Uu{P}^\top)(\m{A}_C-\m{P}_C)(\Vv{A}- \Vv{P}\m{W})\Sig{A}^{-1/2}\\
    \m{R}_{1,3} &= -\Uu{P}\Uu{P}^\top(\m{A}_C-\m{P}_C)\Vv{P}\m{W}\Sig{A}^{-1/2}\\
    \m{R}_{1,4} &= (\m{A}_C-\m{P}_C) \Vv{P}(\m{W}\Sig{A}^{-1/2} - \Sig{P}^{-1/2}\m{W})
\end{align*}
and
\begin{align*}
    \m{R}_{2,1} &= \Vv{P}(\Vv{P}^\top \Vv{A}\Sig{A}^{1/2} - \Sig{P}^{1/2}\m{W})\\
    \m{R}_{2,2} &= (\m{I}-\Vv{P}\Vv{P}^\top)(\m{A}_C-\m{P}_C) (\Uu{A}- \Uu{P}\m{W})\Sig{A}^{-1/2}\\
    \m{R}_{2,3} &= -\Vv{P}\Vv{P}^\top(\m{A}_C-\m{P}_C)^\top\Uu{P}\m{W}\Sig{A}^{-1/2}\\
    \m{R}_{2,4} &= (\m{A}_C-\m{P}_C)^\top\Uu{P}(\m{W}\Sig{A}^{-1/2} - \Sig{P}^{-1/2}\m{W})
\end{align*}
and let $\hat{\m{R}}_{1,i}$ and $\hat{\m{R}}_{2,i}$ denote the restrictions to the first $n$ and $Tn$ rows of $\m{R}_{1,i}$ and $\m{R}_{2,i}$ respectively.  Then the following bounds hold:

\begin{enumerate}[label=\normalfont(\roman*)]
\item $\| \hat{\m{R}}_{1,1} \|_{2\to \infty} = \OP\left(\frac{T^{1/4}\log(n)\,r_\alpha(1,1/\beta)^2}{\rho^{1/2}n}\right)$ and $\| \hat{\m{R}}_{2,1} \|_{2\to \infty} = \OP\left(\frac{\log(n)\,r_\alpha(1,1/\beta)^2}{T^{1/4}\rho^{1/2}n}\right)$.
\item $\| \hat{\m{R}}_{1,2} \|_{2\to \infty} = \OP\left(\frac{T^{1/4}\log(n)\,r_\alpha(1,1/\beta)^2}{\rho^{1/2} n^{3/4}}\right)$ and $\| \hat{\m{R}}_{2,2} \|_{2\to \infty} = \OP\left(\frac{\log(n)\,r_\alpha(1,1/\beta)^2}{T^{1/4}\rho^{1/2} n^{3/4}}\right)$.
\item $\| \hat{\m{R}}_{1,3} \|_{2\to \infty} = \OP\left(\frac{\log^{1/2}(n)\,r_\alpha(1,1/\beta)}{T^{1/4}\rho^{1/2}n}\right)$ and $\| \hat{\m{R}}_{2,3} \|_{2\to \infty} = \OP\left(\frac{\log^{1/2}(n)\,r_\alpha(1,1/\beta)}{T^{3/4}\rho^{1/2}n}\right)$.
\item $\| \hat{\m{R}}_{1,4} \|_{2\to \infty}, \| \hat{\m{R}}_{2,4} \|_{2\to \infty}= \OP\left(\frac{\log^{3/2}(n)\,r_\alpha(1,1/\beta)^3}{\rho n}\right)$.
\end{enumerate}

\end{proposition}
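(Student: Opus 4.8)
The plan is to bound each of the eight restricted residuals by peeling off one factor at a time and invoking the sub-multiplicativity rules $\|\m{M}\m{N}\|_{2\to\infty}\le\|\m{M}\|_{2\to\infty}\|\m{N}\|_2$, $\|\m{M}\m{N}\|_{2\to\infty}\le\|\m{M}\|_{2\to\infty}\|\m{N}\|_{\mr{Frob}}$ and $\|\m{M}\|_{2\to\infty}\le\|\m{M}\|_2$, reducing everything to quantities already controlled in Proposition~\ref{prop8}, Corollary~\ref{coro10} and Propositions~\ref{prop13}--\ref{prop15}. Two preliminary estimates are needed first. (a) \emph{Row norms of the population singular vectors.} Each network row of $\m{P}_C$ has squared norm $\O(Tn\rho^2)$ (the $\m{A}$-block contributes $\O(\rho)$ over $Tn$ entries and, since $p=\O(1)$, the covariate block contributes only $\O(T\rho)$) and each network column has squared norm $\O(n\rho^2)$; combined with $\sigma_d(\m{P}_C)=\Theta(T^{1/2}\rho n)$ from Assumption~\ref{sing values P ass} and the bound $\|(\Uu{P})_{i\cdot}\|^2\le\|(\m{P}_C)_{i\cdot}\|^2/\sigma_d(\m{P}_C)^2$ (and the transpose version for $\Vv{P}$), this yields $\|(\Uu{P})_{1:n}\|_{2\to\infty}=\O(n^{-1/2})$ and $\|(\Vv{P})_{1:Tn}\|_{2\to\infty}=\O((Tn)^{-1/2})$. (b) \emph{A row-wise concentration bound.} Conditioning on the latent positions, a network row of $(\m{A}_C-\m{P}_C)\Vv{P}$ is a sum of independent $d$-vectors $\sum_c(\m{A}_C-\m{P}_C)_{ic}(\Vv{P})_{c\cdot}$; applying Hoeffding's inequality to the $Tn$ network summands and the subexponential matrix Bernstein inequality (Theorem~6.2 of \cite{tropp2012tails}, as in Proposition~\ref{prop13}(ii)) to the covariate summands, and using $\Vv{P}^\top\Vv{P}=\m{I}_d$, gives after a union bound over the $\O(n)$ rows that $\|[(\m{A}_C-\m{P}_C)\Vv{P}]_{1:n}\|_{2\to\infty}$ and $\|[(\m{A}_C-\m{P}_C)^\top\Uu{P}]_{1:Tn}\|_{2\to\infty}=\OP(\log^{1/2}(n)\,r_\alpha(1,\rho^{1/2}/\beta))$.

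With (a) and (b) in hand, the terms $\hat{\m{R}}_{\cdot,1}$, $\hat{\m{R}}_{\cdot,3}$ and $\hat{\m{R}}_{\cdot,4}$ are routine. For $\hat{\m{R}}_{1,1}$, factor out $(\Uu{P})_{1:n}$ and bound $\|\Uu{P}^\top\Uu{A}\Sig{A}^{1/2}-\Sig{P}^{1/2}\m{W}\|_2\le\|\Uu{P}^\top\Uu{A}-\m{W}\|_{\mr{Frob}}\|\Sig{A}^{1/2}\|_2+\|\m{W}\Sig{A}^{1/2}-\Sig{P}^{1/2}\m{W}\|_{\mr{Frob}}$ using Proposition~\ref{prop14}, Proposition~\ref{prop15}(ii) and Corollary~\ref{coro10}; $\hat{\m{R}}_{2,1}$ is identical with $(\Vv{P})_{1:Tn}$ in place of $(\Uu{P})_{1:n}$. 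For $\hat{\m{R}}_{1,3}$, factor out $(\Uu{P})_{1:n}$ and use $\|\Uu{P}^\top(\m{A}_C-\m{P}_C)\Vv{P}\|_{\mr{Frob}}=\OP(\log^{1/2}(n)\,r_\alpha(1,\rho^{1/2}/\beta))\le\OP(\log^{1/2}(n)\,r_\alpha(1,1/\beta))$ (Proposition~\ref{prop13}(ii), since $\rho\le1$) together with $\|\Sig{A}^{-1/2}\|_2=\OP((T^{1/2}\rho n)^{-1/2})$; $\hat{\m{R}}_{2,3}$ is the transpose version with $(\Vv{P})_{1:Tn}$. For $\hat{\m{R}}_{1,4}$ and $\hat{\m{R}}_{2,4}$, factor out the concentration bound from (b) and multiply by $\|\m{W}\Sig{A}^{-1/2}-\Sig{P}^{-1/2}\m{W}\|_{\mr{Frob}}$ from Proposition~\ref{prop15}(iii); the inequality $r_\alpha(1,\rho^{1/2}/\beta)\le r_\alpha(1,1/\beta)$ then puts the product inside the stated bound. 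In every case one finishes by a union bound and integrating over the latent positions, exactly as in Propositions~\ref{prop8} and \ref{prop13}.

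The genuinely delicate terms are $\hat{\m{R}}_{1,2}$ and $\hat{\m{R}}_{2,2}$. Writing the first $n$ rows of $\m{I}-\Uu{P}\Uu{P}^\top$ as $(\m{I})_{1:n}-(\Uu{P})_{1:n}\Uu{P}^\top$, the $(\Uu{P})_{1:n}\Uu{P}^\top$-part is harmless (bound $\|\Uu{P}^\top(\m{A}_C-\m{P}_C)(\Vv{A}-\Vv{P}\m{W})\|_2$ via Proposition~\ref{prop13}(iii), then multiply by $\O(n^{-1/2})$ and $\|\Sig{A}^{-1/2}\|_2$), so the crux is $\|[(\m{A}_C-\m{P}_C)(\Vv{A}-\Vv{P}\m{W})]_{1:n}\|_{2\to\infty}$. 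The naive bound $\max_i\|(\m{A}_C-\m{P}_C)_{i\cdot}\|\cdot\|\Vv{A}-\Vv{P}\m{W}\|_2=\OP((Tn\rho)^{1/2}r_\alpha(1,1/\beta))\cdot\OP(\rho^{-1/2}n^{-1/2}\log^{1/2}(n)\,r_\alpha(1,1/\beta))$ (using Propositions~\ref{prop13}(i) and \ref{prop14} for the second factor) overshoots the target by a factor $\asymp n^{1/4}$, because $\Vv{A}-\Vv{P}\m{W}$ depends on the noise in row $i$ and this coupling must be broken. I would introduce, for each network node $i$, a leave-one-out matrix $\m{A}_C^{(i)}$ equal to $\m{A}_C$ with all entries touching node $i$ (its row and the symmetric adjacency columns $(t,i)$) replaced by their expectations, so that the singular subspace $\Vv{A}^{(i)}$ of $\m{A}_C^{(i)}$ is independent of $(\m{A}_C-\m{P}_C)_{i\cdot}$; then split $(\m{A}_C-\m{P}_C)_{i\cdot}(\Vv{A}-\Vv{P}\m{W})$ into a decoupled piece $(\m{A}_C-\m{P}_C)_{i\cdot}(\Vv{A}^{(i)}-\Vv{P}\m{W}^{(i)})$, controlled by conditional matrix Bernstein, plus a perturbation piece governed by $\|\Vv{A}-\Vv{A}^{(i)}\m{O}^{(i)}\|_2=\OP(\|\m{A}_C-\m{A}_C^{(i)}\|_2/\sigma_d(\m{A}_C))$ from Davis--Kahan and Corollary~\ref{coro10}; the interaction of these two pieces is what produces the $n^{-3/4}$ rate. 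The term $\hat{\m{R}}_{2,2}$ is handled symmetrically, interchanging $\Uu{}\leftrightarrow\Vv{}$ and rows $\leftrightarrow$ columns. The main obstacle is precisely this last step: keeping the perturbation bound on $\Vv{A}-\Vv{A}^{(i)}\m{O}^{(i)}$ sharp given the asymmetric (but partly symmetric) block structure of $\m{A}_C$ — so that deleting row $i$ also entails deleting the columns sharing its randomness — and verifying that the decoupled Bernstein term and the perturbation term genuinely combine to the claimed rate rather than to the cruder $n^{-1/2}$. Everything else is mechanical combination of the operator-, Frobenius- and row-norm estimates already available.
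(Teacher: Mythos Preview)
Your preliminaries (a) and (b) and your treatment of parts (i), (iii) and (iv) coincide with the paper's proof: same row-norm bounds $\|(\Uu{P})_{1:n}\|_{2\to\infty}=\O(n^{-1/2})$, $\|(\Vv{P})_{1:Tn}\|_{2\to\infty}=\O((Tn)^{-1/2})$, same row-wise concentration $\|[(\m{A}_C-\m{P}_C)\Vv{P}]_{1:n}\|_{2\to\infty}=\OP(\log^{1/2}(n)\,r_\alpha(1,\rho^{1/2}/\beta))$, and the same factorisations pulling in Propositions~\ref{prop13}--\ref{prop15} and Corollary~\ref{coro10}.

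The real divergence is part (ii). You correctly isolate the crux---the dependence of $\Vv{A}$ on the $i$th row of the noise blocks the naive product bound---and propose a leave-one-out decoupling, which you flag as the main unresolved obstacle. The paper sidesteps this entirely with a much shorter device. After peeling off the same $\Uu{P}\Uu{P}^\top$-piece you describe, it further writes the remainder as $(\m{A}_C-\m{P}_C)\Vv{P}(\Vv{P}^\top\Vv{A}-\m{W})\Sig{A}^{-1/2}$ (controlled by (b) and Proposition~\ref{prop14}) plus $\hat{\m{M}}\Vv{A}\Sig{A}^{-1/2}$ with $\hat{\m{M}}=(\hat{\m{A}}_C-\hat{\m{P}}_C)(\m{I}-\Vv{P}\Vv{P}^\top)\Vv{A}\Vv{A}^\top$. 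The key observation is that the latent positions are i.i.d., so a simultaneous node permutation $(\m{Q},\m{Q}_*)$ leaves the joint law of $(\m{A}_C,\m{P}_C,\Vv{A},\Vv{P})$ invariant; hence the rows of $\hat{\m{M}}$ are \emph{exchangeable}, giving $\mb{E}\|\hat{\m{M}}_i\|^2=n^{-1}\mb{E}\|\hat{\m{M}}\|_{\mr{Frob}}^2$. Markov's inequality together with the Frobenius bound $\|\hat{\m{M}}\|_{\mr{Frob}}\le\|\m{A}_C-\m{P}_C\|\,\|\Vv{A}\Vv{A}^\top-\Vv{P}\Vv{P}^\top\|_{\mr{Frob}}$ then supplies the missing $n^{-1/4}$ directly, with no decoupling needed; for $\hat{\m{R}}_{2,2}$ the same exchangeability argument is run block-by-block in $t$. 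The trade-off is that the paper's trick is essentially free but leans hard on the i.i.d.\ latent-position assumption and the permutation symmetry of the augmented model, whereas your leave-one-out route is heavier bookkeeping but would survive without that symmetry. Since the obstacle you identify is precisely what the exchangeability argument dissolves, it is worth adopting here.
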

\begin{proof}
We give full proofs of the bounds only for the terms $\hat{\m{R}}_{1,i}$, noting that any differences in the proofs for the terms $\hat{\m{R}}_{2,i}$ can be derived.  Observe that for all $i, j$ we have $\|\hat{\m{R}}_{i,j}\|_{2 \to \infty} \leq \|\m{R}_{i,j}\|_{2 \to \infty}$.

\begin{enumerate}[label=\normalfont(\roman*)]
\item Note that $\Uu{P} = \m{P}_C\Vv{P}^\top\Sig{P}^{-1}$ and so using the relation $\|\m{A}\m{B}\|_{2\to\infty} \leq \|\m{A}\|_{2 \to \infty}\|\m{B}\|$ and the fact that the spectral norm is invariant under orthonormal transformations we find that $\|\Uu{P}\|_{2\to \infty} \leq \|\m{P}_C\|_{2 \to \infty}\|\Sig{P}^{-1}\| = \mr{O}(n^{-1/2})$ by Assumption \ref{sing values P ass} and the fact that the Euclidean norm of any row of $\m{P}_C$ is $\mr{O}(T^{1/2}\rho n^{1/2})$.  Noting that
  \begin{align*}
    \|\hat{\m{R}}_{1,1}\|_{2 \to \infty} &\leq \|\Uu{P}\|_{2 \to \infty}\|\Uu{P}^\top\Uu{A}\Sig{A}^{1/2}-\Sig{P}^{1/2}\m{W}\|\\
                                   &\leq \|\Uu{P}\|_{2 \to \infty}\left(\|(\Uu{P}^\top\Uu{A}-\m{W})\Sig{A}^{1/2}\|_{\mr{Frob}}+\|\m{W}\Sig{A}^{1/2}-\Sig{P}^{1/2}\m{W}\|_{\mr{Frob}}\right)
  \end{align*}
the result follows by applying Proposition \ref{prop14} and \ref{prop15} and Corollary \ref{coro10}. 

For $\hat{\m{R}}_{2,1}$, we note that the Euclidean norm of any column of $\m{P}_C$ is $\mr{O}(\rho n^{1/2})$, and so $\|\Vv{P}\|_{2\to \infty} = \mr{O}((Tn)^{-1/2})$, and the rest of the proof follows in the same manner as before.

\item We begin by splitting the term $\m{R}_{1,2} = \m{M}_1 + \m{M}_2 + \m{M}_3$, where
  \begin{align*}
    \m{M}_1 &= \Uu{P}\Uu{P}^\top (\m{A}_C-\m{P}_C)(\Vv{A} - \Vv{P}\m{W}) \Sig{A}^{-1/2}\\
    \m{M}_2 &= (\m{A}_C-\m{P}_C)\Vv{P}(\Vv{P}^\top\Vv{A}-\m{W})\Sig{A}^{-1/2}\\ 
    \m{M}_3 &= (\m{A}_C-\m{P}_C)(\m{I}-\Vv{P}\Vv{P}^\top)\Vv{A}\Sig{A}^{-1/2}
  \end{align*}

  Now, $$\| \hat{\m{M}}_1 \|_{2\to \infty} \leq \| \Uu{P}\|_{2\to \infty} \|\m{A}_C-\m{P}_C\| \|\Vv{A} - \Vv{P}\m{W}\| \|\Sig{A}^{-1/2}\|$$
  where 
  \begin{align*}
    \|\Vv{A}-\Vv{P} \m{W}\| & \leq \|\Vv{A}-\Vv{P}\Vv{P}^\top\Vv{A}\| + \|\Vv{P}(\Vv{P}^\top\Vv{A}-\m{W})\|\\
                          &= \|\Vv{A}\Vv{A}^\top-\Vv{P}\Vv{P}^\top\| + \|\Vv{P}^\top\Vv{A}-\m{W}\|\\
                          & = \OP\left(\frac{\log^{1/2}(n)\,r_\alpha(1,1/\beta)}{\rho^{1/2} n^{1/2}}\right)
  \end{align*}
  by Propositions \ref{prop13} and \ref{prop14} and our assumptions regarding the asymptotic growth of $\rho$, and so
 $$\| \hat{\m{M}}_1\|_{2\to \infty} = \OP\left(\frac{T^{1/4} \log(n)\,r_\alpha(1,1/\beta)^2}{\rho^{1/2}n}\right)$$
 by combining this with Proposition \ref{prop8}  and Corollary \ref{coro10}.

 Next, note that
   $$\|\hat{\m{M}}_2\|_{2\to\infty} \leq \|(\m{A}_C-\m{P}_C)\Vv{P}\|_{2\to\infty}\|\Vv{P}^\top\Vv{A}-\m{W}\|\|\Sig{A}^{-1/2}\|.$$
                            
 To bound the term $\|(\m{A}_C-\m{P}_C)\Vv{P}\|_{2 \to \infty}$, one can use an identical argument to that of Proposition \ref{prop13}(ii) to find that the absolute value of each term in the matrix is $\OP(\log^{1/2}(n)\,r_\alpha(1,\rho^{1/2}/\beta))$, and since each row has $d$ elements, the bound also holds for the 2-to-infinity norm.
 By applying Proposition \ref{prop14} and Corollary \ref{coro10} we then find that $$\|\hat{\m{M}}_2\|_{2\to\infty} = \OP\left(\frac{\log^{3/2}(n)\,r_\alpha(1,1/\beta)^3}{T^{1/4}\rho^{3/2} n^{3/2}}\right).$$

To bound $\|\hat{\m{M}}_3\|_{2\to\infty}$, let $\hat{\m{M}} = (\hat{\m{A}}_C-\hat{\m{P}}_C)(\m{I}-\Vv{P}\Vv{P}^\top) \Vv{A}\Vv{A}^\top$, so that $\hat{\m{M}}_3 =\hat{\m{M}}\Vv{A}\Sig{A}^{-1/2}$ and thus
$$\|\hat{\m{M}}_3\|_{2 \to \infty}\leq \|\hat{\m{M}}\|_{2 \to\infty}\|\Vv{A}\Sig{A}^{-1/2}\|.$$

The term $\|\Vv{A}\Sig{A}^{-1/2}\|$ is $\OP\left(\frac{1}{T^{1/4}\rho^{1/2} n^{1/2}}\right)$ by Corollary \ref{coro10}, so it remains to bound $\| \hat{\m{M}}\|_{2 \to\infty}$. 

To do so, we claim that the Frobenius norm of the rows of $\hat{\m{M}}$ are exchangeable and thus have the same expected value, which in turn implies that $\mb{E}[\|\hat{\m{M}}\|_{\mr{Frob}}^2] = n \mb{E}[\|\hat{\m{M}}_i\|^2]$ for any $i \in [n]$. Applying Markov's inequality, we therefore see that 
$$\mb{P}(\|\hat{\m{M}}_i\|>r) \leq \frac{\mb{E}[\|\hat{\m{M}}_i\|^2]}{r^2} =\frac{\mb{E}[\|\hat{\m{M}}\|_{\mr{Frob}}^2]}{nr^2}.$$

Now, 
$$\| \hat{\m{M}}\|_{\mr{Frob}} \leq\| \m{M}\|_{\mr{Frob}}=\OP\left(T^{1/2}\log(n)\,r_\alpha(1,1/\beta)^2\right)$$
by Propositions \ref{prop8} and \ref{prop13} (where we note that $(\m{I}-\Vv{P}\Vv{P}^\top)\Vv{A}\Vv{A}^\top = (\Vv{A}\Vv{A}^\top-\Vv{P}\Vv{P}^\top)\Vv{A}\Vv{A}^\top$ to apply the latter) and so it follows that 
$$\|\hat{\m{M}}\|_{2 \to \infty} = \OP\left(\frac{T^{1/2}\log(n)\,r_\alpha(1,1/\beta)^2}{n^{1/4}}\right)$$
and 
$$\|\hat{\m{M}}_3\|_{2 \to \infty} = \OP\left(\frac{T^{1/4}\log(n)\,r_\alpha(1,1/\beta)^2}{\rho^{1/2} n^{3/4}}\right).$$

To see that our claim is true, let $\m{Q} \in \mr{O}(n)$ be a permutation matrix, and let $$\m{Q}_* = \mr{diag}(\underbrace{\m{Q},\ldots,\m{Q}}_{T},\m{I}_{Tp}) \in \mr{O}(T(n+p)).$$
Since the latent positions $\m{Z}_i$ are assumed to be iid, the matrix entries of the pairs $(\m{A}_C,\m{P}_C)$ and $(\m{Q}\m{A}_C\m{Q}_*^\top,\m{Q}\m{P}_C\m{Q}_*^\top)$ have the same joint distribution, as these transformations simply correspond to a relabelling of the nodes, while the left- and right-singular vectors of $\m{Q}\m{A}_C\m{Q}_*^\top$ (respectively $\m{Q}\m{P}_C\m{Q}_*^\top$) are given by $\m{Q}\Uu{A}$ and $\m{Q}_*\Vv{A}$ (respectively $\m{Q}\Uu{P}$ and $\m{Q}_*\Vv{P}$).

Thus, since
$$\m{Q}\hat{\m{M}}\m{Q}_*^\top = \m{Q}(\hat{\m{A}}_C-\hat{\m{P}}_C)\m{Q}_*^\top(\m{I}-\m{Q}_*\Vv{P}\Vv{P}^\top\m{Q}_*^\top) \m{Q}_*\Vv{A}\Vv{A}^\top\m{Q}_*^\top$$
we deduce that the matrix entries of $\hat{\m{M}}$ and $\m{Q}\hat{\m{M}}\m{Q}_*^\top$ have the same joint distribution, proving our claim.

Combining these results, we see that $$\|\hat{\m{R}}_{1,2}\|_{2\to \infty}= \OP\left(\frac{T^{1/4}\log(n)\,r_\alpha(1,1/\beta)^2}{\rho^{1/2} n^{3/4}}\right)$$
as required. 

The proof for $\hat{\m{R}}_{2,2}$ follows similarly, but involves splitting the matrix $\hat{\m{M}}' = (\hat{\m{A}}_C-\hat{\m{P}}_C)^\top(\m{I}-\Uu{P}\Uu{P}^\top) \Uu{A}\Uu{A}^\top$ into $T$ distinct blocks and applying the same argument to show that the rows of each individual block are exchangeable.  Note that in this case we divide through by a factor of $T^{1/2}$, as the spectral norm of each individual block is $\OP(n^{1/2}\log^{1/2}(n)\,r_\alpha(1,1/\beta))$, and consequently find that $$\|\hat{\m{R}}_{2,2}\|_{2\to \infty}= \OP\left(\frac{\log(n)\,r_\alpha(1,1/\beta)^2}{T^{1/4}\rho^{1/2} n^{3/4}}\right).$$

\item We see that
  \begin{align*}
    \|\hat{\m{R}}_{1,3}\|_{2\to\infty} &\leq \|\Uu{P}\|_{2\to\infty}\|\Uu{P}^\top(\m{A}_C-\m{P}_C)\Vv{P}\|_{\mr{Frob}}\|\m{W}\Sig{A}^{-1/2}\|\\
                           &= \OP\left(\frac{\log^{1/2}(n)\,r_\alpha(1,1/\beta)}{T^{1/4}\rho^{1/2}n}\right)
  \end{align*}
by Proposition \ref{prop13} and Corollary \ref{coro10}.
The proof for $\hat{\m{R}}_{2,3}$ follows identically, noting the additional factor of $T^{1/2}$ in the denominator from $\|\Vv{P}\|_{2 \to \infty}$.
\item We see that
  \begin{align*}
  \|\m{R}_{1,4}\|_{2\to\infty} &\leq \|(\m{A}_C-\m{P}_C)\Vv{P}\|_{2 \to \infty}\|\m{W}\Sig{A}^{-1/2}-\Sig{P}^{-1/2}\m{W}\|_{\mr{Frob}}\\
  &= \OP\left(\frac{\log^{3/2}(n)\,r_\alpha(1,1/\beta)^3}{\rho n}\right)\end{align*}
by applying Proposition \ref{prop15} and bounding the term $\|(\m{A}_C-\m{P}_C)\Vv{P}\|_{2 \to \infty}$ as in part (ii).
\end{enumerate}
\end{proof}

\textbf{Theorem 1}
\begin{proof}
    
We first consider the left embedding $\hat{\m{X}}_{\m{A}}$. Observe that 

$$\hat{\m{X}}_{\m{A}}-\hat{\m{X}}_{\m{P}}\m{W} = (\hat{\m{A}}_C-\hat{\m{P}}_C)\Vv{P}\Sig{P}^{-1/2}\m{W} + \sum_{i=1}^4 \hat{\m{R}}_{1,i}$$

and so 
$$\|\hat{\m{X}}_{\m{A}}-\hat{\m{X}}_{\m{P}}\m{W}\|_{2 \to \infty} \leq \sigma_d({\m{P}})^{-1/2}\|(\hat{\m{A}}_C-\hat{\m{P}}_C)\Vv{P}\|_{2 \to \infty} + \sum_{i=1}^4 \|\hat{\m{R}}_{1,i}\|_{2\to\infty}.$$

As shown in the proof of Proposition \ref{prop19}, $\|(\hat{\m{A}}_C-\hat{\m{P}}_C)\Vv{P}\|_{2\to\infty} = \OP(\log^{1/2}(n)\,r_\alpha(1,1/\beta))$, and consequently we see that
\begin{align*}
  \|\hat{\m{X}}_{\m{A}}-\hat{\m{X}}_{\m{P}}\m{W}\|_{2 \to \infty} &= \OP\left(\frac{\log^{1/2}(n)\,r_\alpha(1,1/\beta)}{T^{1/4}\rho^{1/2}n^{1/2}}\right) + \sum_{i=1}^4 \|\hat{\m{R}}_{1,i}\|_{2\to\infty}\\
                                                                  &= \OP\left(\frac{\log^{1/2}(n)\,r_\alpha(1,1/\beta)}{T^{1/4}\rho^{1/2}n^{1/2}}\right)
\end{align*}
by using the bounds on $\|\hat{\m{R}}_{1,i}\|_{2 \to \infty}$ from Proposition \ref{prop19}.

Similarly, for the embeddings $\hat{\m{Y}}^{(t)}_{\m{A}}$ we find that
$$\hat{\m{Y}}^{(t)}_{\m{A}}-\hat{\m{Y}}^{(t)}_{\m{P}}\m{W} = (\hat{\m{A}}^{(t)}_C-\hat{\m{P}}^{(t)}_C)^\top\Uu{P}\Sig{P}^{-1/2}\m{W} + \sum_{i=1}^4 \hat{\m{R}}_{2,i}$$

and so 
$$\|\hat{\m{Y}}^{(t)}_{\m{A}}-\hat{\m{Y}}^{(t)}_{\m{P}}\m{W}\|_{2 \to \infty} \leq \sigma_d({\m{P}})^{-1/2}\|(\hat{\m{A}}^{(t)}_C-\hat{\m{P}}^{(t)}_C)^\top\Uu{P}\|_{2 \to \infty} + \sum_{i=1}^4 \|\hat{\m{R}}_{1,i}\|_{2\to\infty},$$ and an identical argument yields the same bound.
 \end{proof}

\section{Proof of Lemma \ref{stab-lemma}} \label{sec:prooflemma}

\begin{proof}
For node/time pairs $(i,s)$ and $(j,t)$ to be exchangeable in the attributed dynamic latent position model (Definition~\ref{def:ADCM}), for all $\+Z \in \mathcal{Z} \cup \mathcal{I}$,
\begin{align*}
    H(\Z_i^{(s)}, \Z) &= H(\Z_j^{(t)}, \Z).
\end{align*}
This implies that the corresponding rows of the mean attributed unfolded adjacency matrix are equal, $(\P_C^{(s)})_i = (\P_C^{(t)})_j$. Since $\X_\P (\Y_\P^{(t)})^\top = \P_C^{(t)}$ is a symmetric matrix for all $t \in [T]$,
\begin{align*}
    \Y_\P^{(t)} = \P^{(t)} \X_\P(\X_\P^\top \X_\P)^{-1},
\end{align*}
which implies that the corresponding rows of the noise-free dynamic embedding are equal, $(\Y_\P^{(s)})_i = (\Y_\P^{(t)})_j$.

By Theorem~\ref{2toinf},
\begin{align*}
    \lVert \hat{\Y}_i^{(s)} - (\Y^{(s)}_{\P})_i \W \rVert, \
    \lVert \hat{\Y}_j^{(t)} - (\Y^{(t)}_{\P})_j \W \rVert &= \OP\left(\frac{\log^{1/2}(n)\,r_\alpha(1,\log^\gamma(n))}{T^{1/4}\rho^{1/2}n^{1/2}}\right),
\end{align*}
which, by the triangle inequality and the equality of the noise-free embeddings, shows that
\begin{align*}
    \lVert \hat{\Y}_i^{(s)} - \hat{\Y}_j^{(t)} \rVert &= \OP\left(\frac{\log^{1/2}(n)\,r_\alpha(1,\log^\gamma(n))}{T^{1/4}\rho^{1/2}n^{1/2}}\right). \qedhere
\end{align*}
\end{proof}
\clearpage

\section{Details of the simulated example of Section \ref{Sec:Example}}\label{sec:extra_3}

\subsection{Experimental setup details}

We define the attributed dynamic latent position model in Definition~\ref{def:ADCM},
\begin{align*}
    \+A_{ij}^{(t)} \mid Z_i^{(t)}, Z_j^{(t)} &\stackrel{\text{ind}}{\sim} \t{Bernoulli}(\+B_{Z_i^{(t)}, Z_j^{(t)}}),\\
    \+C_{i\ell}^{(t)} \mid Z_i^{(t)} &\stackrel{\text{ind}}{\sim} \t{Normal}(\+D_{Z^{(t)}_i, \ell}, \sigma^2 \+I),
\end{align*}
with community edge probability matrix $\+B \in \mathbb{R}^{3 \times 3}$ and mean attribute matrix $\+D \in \mathbb{R}^{3 \times p}$,
\begin{equation*}
    \+B = \begin{pmatrix}
    p_1 & p_0 & p_0\\
    p_0 & p_0 & p_0\\
    p_0 & p_0 & p_0
\end{pmatrix}, \quad 
    \+D = \begin{pmatrix}
    \bm{\mu}_1 \\
    \bm{\mu}_1 \\
    \bm{\mu}_2
\end{pmatrix},
\end{equation*}
where $p_0 = 0.5$, $p_1 = 0.3$, $\bm{\mu}_1 = [\underbrace{0, \dots 0}_\text{0-19},\underbrace{1, \dots 1}_\text{20-74},\underbrace{0, \dots 0}_\text{ 75-149}] $ and $\bm{\mu}_2 =  [\underbrace{0, \dots 0}_\text{0-79},\underbrace{1, \dots 1}_\text{80-139},\underbrace{0, \dots 0}_\text{ 140-149}]$.

\subsection{Hyperparameter selection}
We set the embedding dimension $d =3$ for all the methods, the known number of communities. Experiments with other embedding dimension are given in \url{https://anonymous.4open.science/r/AUASE-80E4/README.md}. The selection of hyperparameters for each method shown in Figure~\ref{Figure1} is outlined below:
\begin{itemize}
    \item UASE - no other parameters. 
    \item AUASE - visually chosen parameters (see Figure~\ref{fig:AUASE_alpha}): $\alpha = 0.2$.
    \item DRLAN - visually chosen parameters (see \url{https://anonymous.4open.science/r/AUASE-80E4/README.md}): $\beta = 0.7$, $p = 2$, $q = 1$, $\alpha = [1,0.1,0.001,0.0001]$, $\theta = [1,1,0.1,0.001,0.0001]$. 
    \item DySAT - default parameters. 
\end{itemize}

\subsection{Additional figures}\label{sec:extra_plot3}

\begin{figure*}
\centering
  \includegraphics[width=\textwidth]{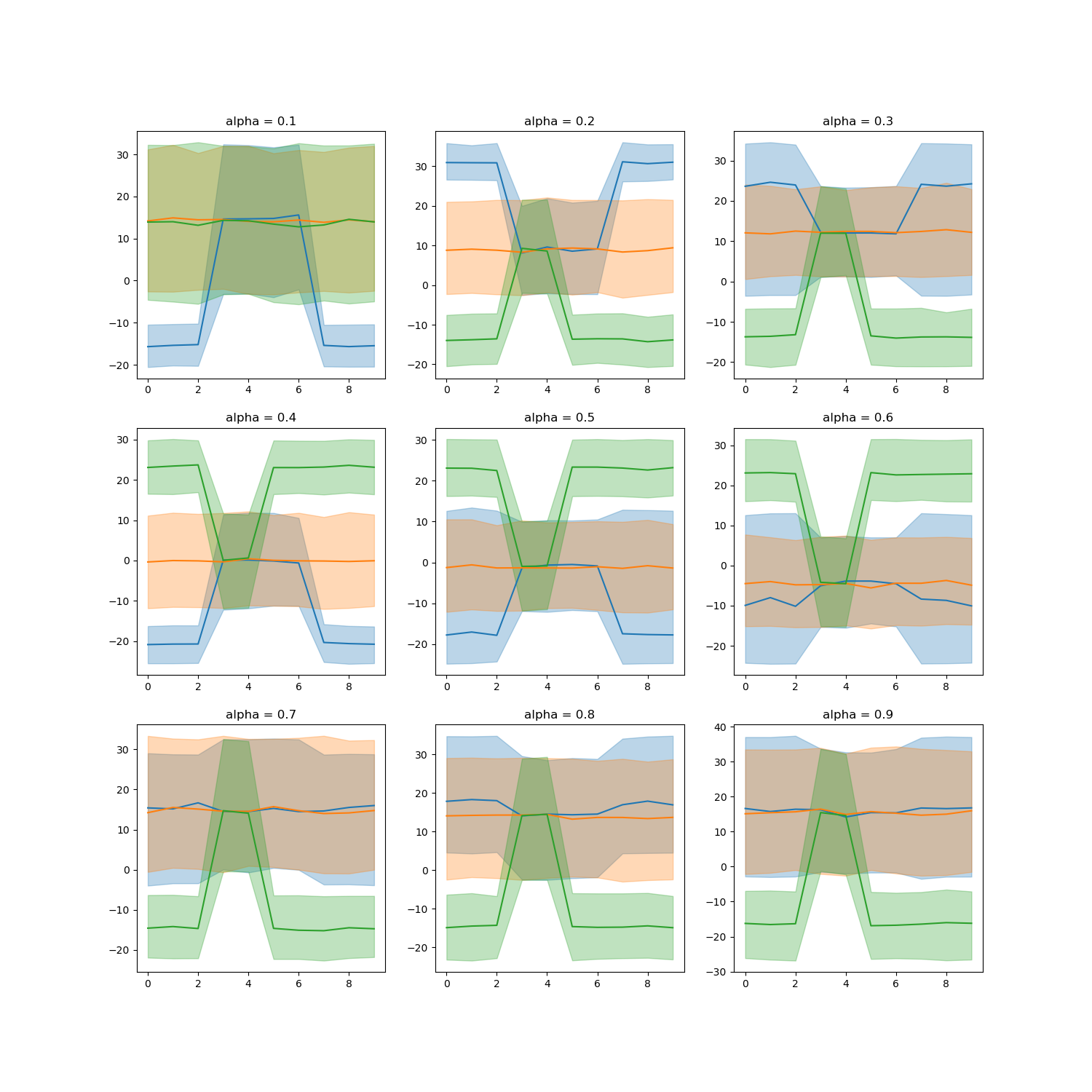}
  \caption{One-dimensional UMAP visualisation of the AUASE node embeddings for varying $\alpha \in[0.1,0.9]$. The coloured lines show the mean embedding for each community with a 90\% confidence interval.}
    \label{fig:AUASE_alpha}
\end{figure*}

\FloatBarrier
\section{Details of the real data analysis of Section \ref{Sec:RealData}} \label{sec:extra_4}

\subsection{Experimental setup details}

\textbf{DBLP.} The raw data can be downloaded from \url{https://www.aminer.cn/citation}  as \textit{DBLP-Citation-network V3}. We consider papers published from 1996 to 2009. We group years 1996 - 2000 and 2001 - 2002, while we consider the remaining years as a single time point so that in each time point we have a similar order of magnitude of papers. We only consider authors who published at least three papers between 1996 and 2009. 

To construct the adjacency matrices, we form an edge if two authors collaborated on at least one paper within that time point. For the attributes, we consider words from titles and abstracts of the papers published by an author within a certain time point. We remove stop words, words that have less than three characters, and any symbol that is not an English word. Then, we remove words that have an overall count of less than 100 and the ones that have an overall count of more than 5000. 

We label each paper from the venue where it is published as follows:
\begin{itemize}
    \item Computer Architecture: PPOPP, PPOPP/PPEALS, ASP-DAC, EURO-DAC, DAC, MICRO, PODC, DIALM-PODC.
    \item Computer Network: SIGCOMM, MOBICOM, MOBICOM-CoRoNet, INFOCOM, SenSys. 
    \item Data Mining: SIGMOD Conference, SIGMOD Workshop, Vol. 1, SIGMOD Workshop, Vol. 2, SIGSMALL/SIGMOD Symposium, SIGMOD Record, ICDE, ICDE Workshops, SIGIR, SIGIR Forum.
    \item Computer Theory: STOC, SODA, CAV, FOCS.
    \item Multi-Media: SIGGRAPH, IEEE Visualization, ICASSP.
    \item Artificial Intelligence: IJCAI, IJCAI (1), IJCAI (2), ACL2, ACL, NIPS.
    \item Computer-Human Interaction: IUI, PerCom, HCI.
\end{itemize}

Then, each author is assigned a label at each time point based on the majority label of their papers published within that time point.

\textbf{ACM.} The raw data can be downloaded from \url{https://www.aminer.cn/citation}  as \textit{ACM-Citation-network V8}. We consider the years from 2000 to 2014; each year is a time point. The network and the attributes are constructed in the same manner as for DBLP. The labels are constructed as follows:
\begin{itemize}
    \item Data Science: VLDB, SIGMOD, PODS, ICDE, EDBT, SIGKDD, ICDM, 
          DASFAA, SSDBM, CIKM, PAKDD, PKDD, SDM, DEXA.
    \item Computer Vision: CVPR, ICCV, ICIP, ICPR, ECCV, ICME, ACM-MM.
    
\end{itemize}

\textbf{Epinions.} The raw data can be downloaded from \url{https://www.cse.msu.edu/~tangjili/datasetcode/truststudy.htm}. We consider the years from 2001 to 2011, and we consider users who establish at least two trust relationships in this time frame. Each year is a time point, and an edge is formed if the users trusted each other within that year. 

The attributes are the words of the titles of the reviews published by each author within that year. We remove stop words, words with less than three characters, and words with an overall count of less than 100 or more than 10000.  

To construct the labels, we consider the most frequent category of the reviews published by an author in a given year. We merge some sparse categories with similar, more popular ones; the resulting labels are \textit{Books, Business \& Technology, Cars \& Motorsports, Computers \& Internet, Education, Electronics, Games, Home and Garden, Hotels \& Travel, Kids \& Family, Magazines \& Newspapers, Movies, Music, Musical Instruments, Online Stores \& Services, Personal Finance, Pets, Restaurants \& Gourmet, Software, Sports \& Outdoors,  Wellness \& Beauty}.

\textbf{ogbn-mag.} The raw data can be downloaded from \url{https://ogb.stanford.edu/docs/nodeprop/}. The data spans years from 2010 to 2019, each year is a time point. We consider authors which published at least ten paper in the whole time frame. 

To construct the adjacency matrices, we form an edge if two authors collaborated on at least one paper within that time point.  Each paper is associated with a 128-dimensional word2vec feature vector, to construct the attributes matrices for each author we average the feature vectors of the papers they published in that time point. An author label is set to the label associated with the majority of the papers they published in that time point. 

We use ogbn-mag data to construct a datasets which is suited to our dynamic tasks. The data format and the tasks are at author-level, different than the usual task performed on obgn-mag (`the task is to predict the venue of each paper, given its content, references, authors, and authors’ affiliations'). That is because we are interested in dynamic tasks; authors are dynamic nodes, unlike papers, and their attributes and labels change over time. This creates a much harder task, hence our performance cannot be directly compared to other performances on obgn-mag to predict papers labels.

CONN, Glodyne, DySAT and DyRep are not computationally efficient enough to compute the embeddings of ogbn-mag. We report here the computational barriers for each methods. 

\begin{itemize}
    \item CONN: the authors' code is designed with dense matrices, which would require about 2T of memory for ogbn-mag matrices. 
    \item Glodyne: we capped computational time after 48h.
    \item Dyrep: it requires an unreasonably large amount of memory. The largest dataset the authors' analyses has only $\approx10 000$ nodes. 
\end{itemize}

For all the data sets, the nodes that are not active within a certain time point are considered \textit{Unlabelled}. These nodes are not used for training or testing. 

\subsection{Method comparison}\label{sec:meth_selec}
The following list shows unsupervised dynamic attributes methods we did not include in our method comparison and the reason why. 
\begin{itemize}
    \item \citep{li2017attributed} - source code not available.
    \item \citep{xu2020embedding} - source code not available.
    \item \citep{luodi2024learning} - source code not available.
    \item \citep{liu2021motif} - the code to construct the motif matrices is unavailable, and there is no clear documentation on how to construct them for new datasets. 
    \item \citep{tang2022dynamic} - the source code is undocumented and raises memory errors on the datasets considered in this paper.   \item \citep{mo2024deep} - source code not available.   
    \item \citep{wei2019lifelong} - source code not available.   
    \item \citep{xu2020inductive} - First, the code does not allow for time-varying covariates. Second, it is extremely time-consuming for data sets with large $p$; for example, one epoch on one batch for DBLP takes more than 10 min, meaning the whole computation would be more than 1000 hours. The authors presented examples with a maximum $p =200$, while for the data sets considered in the paper, $p$ is at least 4000. 
    
    \item \citep{ahmed2024learning} - source code not available. 
\end{itemize}

\subsection{Hyperparameter selection}
We choose the embedding dimension using ScreeNOT giving $d = 12$, $d = 29$ and $d = 22$ for DBLP, ACM and Epinions, respectively (for
DySAT, we set $d = 32$ for ACM). We use cross-validation to select $\alpha$ for AUASE, $\beta$ for DRLAN
and $\alpha$ for CONN.
\begin{itemize}
    \item UASE - no other parameters. 
    \item AUASE - no other parameters.
    \item DRLAN - we set all the parameters to the default set by the authors for node classification: $p = 2$, $q = 2$, $\alpha = [1,1,0.1,0.1,1]$, $\theta = [1,10,100]$. 
    \item DySAT - we set all the parameters to the default set by the authors: epochs=200, valfreq=1, testfreq=1, batchsize=512, maxgradientnorm=1.0, useresidual='False',negsamplesize=10, walklen=20, negweight=1.0, learningrate=0.001, spatialdrop=0.1, temporaldrop=0.5, weightdecay=0.0005, positionffn='True', window=-1. 
    \item CONN - we set all the parameters to the default set by the authors: nlayer=2, epochs=200, activate="relu", batchsize=1024, lr=0.01, weightdecay=0.0, hid1=512, hid2=64, losstype=entropy, patience=50, dropout=0.6, drop=1. Note that we implemented CONN in the unsupervised version - \textit{main\_lp.py}.
    \item GloDyNE - we set all the parameters to the default set by the authors: limit=0.1, numwalks=10, walklength=80, window=10, negative=5, workers=32, scheme=4.
    \item DyRep - we set epochs to 20, batch size to 20, and we assume all of the links in Jodie as communication; we set all the other parameters to the default set by the authors. We implemented Dyrep without node features as it only supports non-time-varying attributes, and all three datasets we consider have time-varying attributes. 
\end{itemize}

Note that for DySAT, we set $d = 32$ for ACM instead of $d=29$. DySAT set the embedding dimension \textit{structural\_layer\_config = $d$} as function of the hyperparameter \textit{structural\_head\_config = $d_1$, $d_2$, $d_3$} such that $d_1 \times d_2 =d$. For DBLP we set \textit{structural\_head\_config = 4, 3, 2} giving $d =12$, for Epinions we set \textit{structural\_head\_config = 11, 2, 2} giving $d =22$. However, the only option giving $d =29$ is $d_1 =29$, $d_2 =1$. \textit{structural\_head\_config} represent the dimensions of the layers, and two subsequent layers with such a wide gap would give DySAT an unfair disadvantage. Hence we set \textit{structural\_head\_config = 8,4,4} and \textit{structural\_layer\_config = 32} for ACM. Table~\ref{tab:sent_d} shows that our results are robust to the choice of embedding dimension. 

\subsection{Additional tables}
\begin{table}[h]
    \centering
        \caption{AUC of link prediction on AUASE embeddings for varying $\alpha$. For $\alpha=1$ the method is using only the attributes and not the network, contrarily to AUASE for $\alpha \in (0,1).$}\label{tab:alpha_sens}
    \begin{tabular}{c|c|c|c|c|c|c|c|c|c|c}
    \hline
    $\alpha$ & 0.1 & 0.2 & 0.3&0.4&0.5&0.6&0.7&0.8&0.9&1.0\\
    AUC & 0.915 &0.912& 0.907 &0.898 &0.892 &0.883& 0.887 &0.881 &0.866& 0.546\\
    \hline
    \end{tabular}
\end{table}

\begin{table*}[h]
    \centering
    \begin{tabular*}{0.6\textwidth}{c|c|c|c|c|c|c}
    \hline
    Method & Metrics & 2010 & 2011 & 2012 & 2013 & 2014 \\
    \hline

    \multirow{3}{*}{CONN} & F1  & 0.529& 0.541& 0.537 &0.528& 0.37  \\
    & F1 micro    & 0.535 &0.569 &0.553 &0.542& 0.431 \\
&F1 macro &0.490 &0.496& 0.488& 0.515 &0.394 \\

    \hline

    \multirow{3}{*}{GloDyNE}& F1 &{0.703} &{0.712}  &{0.717} &{ 0.731}  &{0.664} \\ & F1 micro &0.584& 0.589& 0.622& 0.559 &0.486\\
    & F1 macro &0.509 &0.515& 0.535 &0.481& 0.441\\

    \hline

    \multirow{3}{*}{DRLAN} & F1 & 0.483 &0.495 & 0.527 & 0.479 & 0.39 \\ & F1 micro   & 0.627 &  0.599  & 0.619  & 0.556  & 0.466  \\
& F1 macro &0.385& 0.417& 0.450 &0.424& 0.403    \\

    \hline

\multirow{3}{*}{DySAT} & F1 &0.662 & 0.557& 0.583 &0.397&NA\\ & F1 micro &0.680  &0.551& 0.578 &0.556&NA \\
    & F1 macro&0.626& 0.548& 0.558& 0.357 &NA\\

    \hline

    \multirow{3}{*}{UASE}& F1 &   {0.755} & {0.728} & {0.753} & {0.804}  & {0.68} \\ & F1 micro  & 0.772& 0.746 &0.771& 0.81&  0.688 \\
    & F1 macro&0.727& 0.702& 0.722 &0.798 &0.685 \\

    \hline

\multirow{3}{*}{AUASE}
    & F1 & {0.809} & {0.797} & {0.790} & {0.859} & {0.878} \\ & F1 micro  &{0.806}& {0.794}& {0.787}& {0.858}& {0.877} \\
& F1 macro&{0.799} &{0.790} &{0.779} &{0.858}& {0.875}  \\

    \hline
    \end{tabular*}
    \caption{Micro, macro and weighted F1 of node classification on ACM.}
    \label{tab:acm_results}
\end{table*}

\begin{table*}[h]
    \centering
    \begin{tabular*}{0.6\textwidth}{c|c|c|c|c|c|c}
    \hline
    Method & Metrics & 2007 & 2008 & 2009 & 2010 & 2011  \\
    \hline
    \multirow{3}{*}{CONN} 
&F1 & 0.062& 0.081& 0.070&  0.067& 0.067\\
&F1 micro &0.114& 0.133& 0.137& 0.129 &0.131\\
& F1 macro &0.030&  0.038& 0.034& 0.032& 0.032\\

    \hline

    \multirow{3}{*}{GloDyNE} 
    & F1 & 0.194 &0.176 &0.141 &0.140  &0.154\\
    &F1 micro &0.211& 0.189 &0.156& 0.155& 0.165\\
    & F1 macro &0.113 &0.103& 0.087 &0.086 &0.096\\
    \hline

    \multirow{3}{*}{DRLAN} 
    & F1 & 0.092 &0.076 &0.106& 0.067 &0.083\\
    &F1 micro   & 0.119 &0.086& 0.135& 0.098& 0.111 \\
& F1 macro &0.049& 0.039& 0.052& 0.032& 0.044   \\
    \hline
    
\multirow{3}{*}{DyRep}
 & F1 &0.118& 0.107 &0.105& 0.095 &0.101 \\
 &F1 micro &0.15&  0.146 &0.142 &0.132 &0.147 \\
 & F1 macro & 0.074 &0.065& 0.065& 0.054 &0.057\\
\hline

\multirow{3}{*}{DySAT} 
& F1 & 0.062& 0.039 &0.036 &0.011& NA\\
& F1 micro &0.069 &0.095 &0.075&0.042&NA \\
    & F1 macro&0.035& 0.019& 0.021 &0.009  &NA\\
    \hline
    \multirow{3}{*}{UASE} 
    &F1 & 0.224 &0.204 &0.203 &0.196 &0.178\\
    
    &F1 micro  & 0.224 &0.204 &0.203 &0.196 &0.178 \\
    & F1 macro&0.112& 0.085& 0.090 & 0.101& 0.076 \\
    \hline   

    \multirow{3}{*}{AUASE} 
    & F1 & {0.318} &{0.304} &{0.279} &{0.266}& {0.263}\\
    &F1 micro  &{0.332} &{0.319} &{0.294}& {0.279}& {0.277}\\
& F1 macro&{0.197}& {0.176} &{0.167} &{0.163} &{0.154} \\
    \hline
    \end{tabular*}
    \caption{Micro, macro and weighted F1 of node classification on Epinions.}
    \label{tab:epin_results}
\end{table*}

\begin{table}
    \centering
    \begin{tabular*}{0.45\textwidth}{c|c|c|c|c}
    \hline
    Method & Metrics & 2007 & 2008 & 2009 \\
    \hline
    \multirow{3}{*}{CONN}

    & F1 & 0.242& 0.189& 0.141\\
    & F1 micro &0.259& 0.219& 0.181\\
    & F1 macro &0.137&  0.127& 0.110 \\
    \hline

        \multirow{3}{*}{GloDyNE} 
& F1  & 0.138  & 0.252 & 0.077\\
    & F1 micro&0.178& 0.270 & 0.106\\
    & F1 macro&0.117& 0.125& 0.064\\
    \hline
    \multirow{3}{*}{DyRep} 
    & F1 &0.193 &0.319 &0.110 \\
    & F1 micro&0.258 &0.367& 0.190\\
    & F1 macro &0.192 &0.194& 0.091\\
    \hline
    \multirow{3}{*}{DRLAN} 
    & F1 &0.135& 0.241& 0.150 \\
    & F1 micro& 0.193& 0.288& 0.214\\
    & F1 macro &0.115& 0.163& 0.148\\
    \hline
  \multirow{3}{*}{DyRep}
  & F1  &0.193 &0.319 &0.110 \\
& F1 micro&0.258 &0.367& 0.190 \\
& F1 macro &0.192& 0.194 &0.091\\
\hline
    \multirow{3}{*}{DySAT} 
    & F1 & 0.029 &0.267& NA \\
        & F1 micro&0.051 &0.299 &NA\\
    & F1 macro&0.055 &0.133&NA\\
    \hline
    \multirow{3}{*}{UASE}  & F1        & {0.433}  & {0.707} & {0.376} \\
        & F1 micro &0.488&  0.717& 0.446\\
    & F1 macro & 0.460 & 0.525& 0.424\\
    \hline
    \multirow{3}{*}{AUASE} 
    & F1 & 0.584 & 0.757 & 0.564 \\
    & F1 micro & 0.590 & 0.768& 0.584\\
    & F1 macro &0.545& 0.554& 0.543\\
    \hline
    \end{tabular*}
    \caption{Micro, macro and weighted F1 of node classification on DBLP.}
    \label{tab:dblp_results}
\end{table}

\begin{table}[h]
\centering
\begin{tabular*}{0.9\textwidth}{c|c|c|c|c|c|c|c|c|c}
\hline
\multirow{2}{*}{Method} & \multicolumn{3}{c|}{DBLP} & \multicolumn{3}{c|}{ACM} & \multicolumn{3}{c}{Epinions} \\ \cline{2-10} 
                  & $d=32$  & $d=64$  & $d=128$ & $d=32$  & $d=64$  & $d=128$ & $d=32$   & $d=64$   & $d=128$  \\ \hline
CONN &0.532& 0.589 &0.597 & 0.766 &0.779 &0.782 &0.165& 0.176& 0.183\\ \hline
GloDyNE & 0.383 &0.386& 0.394 &0.657& 0.661 &0.662 &0.285& 0.288&0.269\\ \hline
DRLAN & 0.657 & 0.692 &  0.723 &0.785& 0.801 &0.830&0.242& 0.258 &0.277\\ \hline
DyRep & 0.274 & - & -& -& -& -& -& -& - \\
\hline
DySAT& 0.752 &0.780  &0.812 & - & - & - & - & -& - \\ \hline
UASE &0.763& 0.774& 0.781 &0.844 &0.849& 0.852&0.229 &0.228& 0.234\\ \hline
AUASE &0.853& 0.864 &0.872  &0.927 & 0.936 & 0.940 &0.306& 0.335& 0.349\\ \hline
\end{tabular*}
\caption{Accuracies of node classification on DBLP, ACM and Epinions for  $d=32$, $d=64$, and $d=128$. For DySAT and DyRep, we only report partial results due to the intensive computational times}
\label{tab:sent_d}
\end{table}

\end{document}